\documentclass{article} % For LaTeX2e
\PassOptionsToPackage{table}{xcolor}
\usepackage{iclr2027_conference,times}

\usepackage{amsmath,amsfonts,bm}

\def\eqref#1{equation~\ref{#1}}
\def\1{\bm{1}}

\DeclareMathAlphabet{\mathsfit}{\encodingdefault}{\sfdefault}{m}{sl}
\SetMathAlphabet{\mathsfit}{bold}{\encodingdefault}{\sfdefault}{bx}{n}

\usepackage{hyperref}
\usepackage{url}
\usepackage{booktabs}
\usepackage{graphicx}
\usepackage{amsthm}
\usepackage{algorithm}
\usepackage{algpseudocode}

\newtheorem{proposition}{Proposition}
\newtheorem{lemma}{Lemma}
\newtheorem{theorem}{Theorem}

\theoremstyle{definition}

\usepackage{caption}
\usepackage{multirow}
\usepackage[inline]{enumitem}

\usepackage{booktabs}
\usepackage[table]{xcolor}
\usepackage{graphicx}

\newcommand{\sold}{\mathrm{old}}
\newcommand{\sref}{\mathrm{ref}}
\newcommand{\sg}{\mathrm{sg}}

\newcommand{\piold}{\pi_{\sold}}

\title{MeanFlowAdvantage: Stable Reward Fine-Tuning for Few-Step Average-Velocity Generators}

\author{Haocheng Tang\thanks{Equal contribution.}\hspace{0.45em}\thanks{Corresponding author.} \\
Northeastern University\\
Boston, MA 02115, USA \\
\texttt{tang.haoc@northeastern.edu}
\And
Tianchi Xie\footnotemark[1] \\
Tsinghua University\\
Beijing, 100084, PRC \\
\texttt{xietc24@mails.tsinghua.edu.cn}
\And
Xingqiao Lin \\
Carnegie Mellon University \\
Pittsburgh, PA 15213, USA \\
\texttt{xingqiao@andrew.cmu.edu}
}

\iclrfinalcopy % Uncomment for camera-ready version, but NOT for submission.
\begin{document}

\maketitle

\begin{abstract}
MeanFlow enables efficient few-step generation by predicting interval-average velocities, but this representation creates a mismatch for reward fine-tuning: existing advantage-based objectives are typically defined on instantaneous velocities or equivalent $x_0$-space predictions, whereas inference directly uses the learned average-velocity map. We introduce \textbf{MeanFlowAdvantage}, a signed advantage-weighted least-squares objective for average-velocity generators. Our key construction uses a shared, detached MeanFlow derivative correction to express the reward objective in prediction space while making rollout and reference regularization exact penalties on the average-velocity network deployed at inference. The resulting formulation preserves MeanFlow's native few-step sampler and provides a direct mechanism for transferring reward improvements to the deployed flow map. On SD3.5-Medium, MeanFlowAdvantage improves all eight reported metrics over the matched four-step MeanFlowNFT baseline and, with only four NFEs, matches or exceeds the 40-step DiffusionNFT baseline on six of eight metrics. The same objective also transfers to DNA promoter design, where it supports both teacher-free on-policy RL for a generator defined on a manifold and teacher-guided reward-graded distillation, with the latter yielding the lowest one-step Sei profile MSE among the compared configurations.
\end{abstract}

\section{Introduction}
\label{sec:intro}

Few-step flow matching changes the representation that post-training must optimize. In standard flow matching, the model predicts an instantaneous velocity whose numerical integration produces samples \citep{lipman2023flow,liu2023rectified,albergo2023stochastic}. MeanFlow \citep{geng2025meanflow,gu2026anyflow}, in contrast, predicts an \emph{interval-average} velocity whose learned flow map can traverse a finite time interval in a single network evaluation. This representation is precisely what enables one-step and few-step sampling. However, it also creates a mismatch for reward fine-tuning: existing advantage-based objectives are typically formulated for instantaneous velocities or equivalent prediction spaces, whereas MeanFlow inference directly executes the learned average-velocity map. A useful reward objective should therefore improve sample quality without losing alignment with the representation that makes few-step generation possible.

Existing reinforcement learning (RL) approaches do not directly resolve this mismatch. Reverse-process methods such as DDPO, DPOK, and Flow-GRPO optimize stochastic denoising trajectories \citep{black2024ddpo,fan2023dpok,liu2025flowgrpo}, while forward-process methods incorporate reward signals into prediction objectives. DiffusionNFT uses implicit positive and negative policies \citep{zheng2026diffusionnft}, Advantage Weighted Matching reweights a matching surrogate \citep{xue2025awm}, and AdvantageFlow applies signed advantage-weighted least squares with rollout regularization \citep{kveton2026advantageflow}. These formulations naturally operate on instantaneous representations. Simply replacing such a predictor with a MeanFlow output is not equivalent: an interval-average velocity represents the transport over an entire interval rather than the instantaneous direction at one time point. Moreover, an induced predictor alone does not ensure that rollout or reference regularization constrains the positive-length average-velocity maps used by the sampler.

Recent forward-process work has shown that MeanFlow can support reward fine-tuning through an induced instantaneous predictor \citep{huang2026meanflownft}. This raises two questions: how can signed advantages be used to stably improve the finite-interval MeanFlow map, and can the same reward-based formulation also distill improved behavior from multi-step teachers into few-step MeanFlow models?

We introduce \textbf{MeanFlowAdvantage} (MFA), a signed advantage-weighted least-squares objective that provides a unified solution to both problems. MFA applies reward supervision in prediction space while regularizing the underlying average-velocity model against both the rollout policy and a frozen reference. A shared, stop-gradient correction makes these regularization terms reduce to direct differences between the corresponding MeanFlow velocity predictions, avoiding interference between reward fitting and model anchoring. This same formulation naturally extends from on-policy reward fine-tuning to reward-guided distillation, allowing few-step MeanFlow students to learn from improved multi-step teacher trajectories. Our contributions are threefold:
\begin{itemize}
\item \textbf{Reward fine-tuning aligned with average-velocity generation.} We formulate signed advantage optimization directly for finite-interval MeanFlow maps, while preserving the induced prediction space needed for reward supervision. This yields a representation-aligned objective that regularizes the average-velocity predictions used for generation rather than only the induced instantaneous boundary predictor.

\item \textbf{State-of-the-art few-step performance in our evaluation.} On SD3.5-Medium, MFA achieves the strongest four-NFE results among the compared MeanFlow reward-finetuning methods, improving all eight reported metrics over MeanFlowNFT. With only four NFEs, it also matches or exceeds the 40-NFE DiffusionNFT baseline on six of eight metrics.

\item \textbf{One objective for both on-policy RL and distillation.} On DNA promoters (Appendix~\ref{sec:expts-dna}), the same loss runs as teacher-free RL on a manifold generator and, with a reward-guided teacher in place of the rollout, transfers graded teacher improvements into a one-step student, achieving the best Sei profile error among the compared configurations.
\end{itemize}

\section{Preliminaries}
\label{sec:prelim}

We use the image convention $t=0$ for data and $t=1$ for noise. The earlier interval endpoint is $s\le t$, and $c$ denotes conditioning. We omit $c$ from the notation when clear from context, and distinguish average velocity $u$, instantaneous velocity $v$, and a training-time induced predictor $V$. The promoter implementation uses the opposite time convention, stated separately in Section~\ref{sec:expts-dna}.

\paragraph{Flow matching}
\label{sec:fm}

Flow matching \citep{lipman2023flow} learns a continuous-time transport from a simple noise distribution to the conditional data distribution $q(x_0\mid c)$. It represents this transport through a time-dependent velocity field whose probability-flow ODE is integrated from noise at $t=1$ to data at $t=0$ at inference time. Rectified flow specializes this construction to a linear interpolation between data and noise. For $x_0\sim q(\cdot\mid c)$ and independent $\epsilon\sim\mathcal N(0,I)$, it defines
\begin{equation}
x_t=(1-t)x_0+t\epsilon,\qquad v_t=\epsilon-x_0,\qquad \mathcal L_{\mathrm{FM}}=\mathbb E\|v_\theta(x_t,t)-v_t\|_2^2 .
\label{eq:xt}
\end{equation}
The population minimizer of this squared loss is the marginal instantaneous velocity $v(x_t,t,c)=\mathbb E[v_t\mid x_t,t,c]$. For rectified flow \citep{liu2023rectified}, the same instantaneous-velocity prediction can equivalently be expressed in $x_0$-space as $f_\theta=x_t-t v_\theta$, while $x_0=x_t-t v_t$. Therefore, $f_\theta-x_0=-t(v_\theta-v_t)$ and $\|f_\theta-x_0\|_2^2=t^2\|v_\theta-v_t\|_2^2$.
This prediction-space representation provides a convenient form for incorporating reward signals into forward-process objectives.

\paragraph{AdvantageFlow}
\label{sec:af}

Building on this representation, AdvantageFlow~\citep{kveton2026advantageflow} incorporates signed reward information through a quadratic objective on the $x_0$-space predictions of the learner, rollout model, and frozen reference:
\begin{equation}
\ell^{\mathrm{AF}}=A\|f_\theta-x_0\|_2^2+\gamma\|f_\theta-f_{\mathrm{old}}\|_2^2+\lambda\|f_\theta-f_{\mathrm{ref}}\|_2^2.
\label{eq:af-loss}
\end{equation}
Here $A$ is a signed advantage and $\gamma,\lambda\ge0$. The loss is strictly convex in the prediction when $A+\gamma+\lambda>0$, even when $A<0$. For standard flow models, this objective acts naturally on a prediction obtained from the instantaneous velocity. MeanFlow, however, deploys an interval-average velocity, so reward fine-tuning must additionally account for the finite-interval map used by its sampler.

\paragraph{MeanFlow and reward fine-tuning}
\label{sec:meanflow}

Along a trajectory $\dot x_\tau=v(x_\tau,\tau)$, MeanFlow represents transport over a finite interval $[s,t]$ through the average velocity
\begin{equation}
u(x_t,s,t)=\frac{1}{t-s}\int_s^t v(x_\tau,\tau)\,\mathrm d\tau,\qquad \lim_{s\to t}u(x_t,s,t)=v(x_t,t).
\label{eq:u-avg}
\end{equation}
Holding $s$ fixed and differentiating along the trajectory yields the MeanFlow identity
\begin{equation}
u+(t-s)\bigl[\partial_tu+(\partial_xu)v\bigr]=v.
\label{eq:mf-identity}
\end{equation}
For an exact average velocity, $x_s=x_t-(t-s)u(x_t,s,t)$ is the exact flow map; replacing $u$ by $u_\theta$ gives the learned sampler $x_s=x_t-(t-s)u_\theta(x_t,s,t)$. Thus, unlike standard flow matching, MeanFlow inference directly uses the  average-velocity maps predicted by $u_\theta$. Reward fine-tuning should therefore remain aligned with these deployed maps. MeanFlowNFT uses the MeanFlow identity to construct an instantaneous predictor $V$ from $u$ and applies an NFT objective to $V$~\citep{huang2026meanflownft}. We use the same induction with the signed quadratic in~\eqref{eq:af-loss}, while regularizing the corresponding average-velocity maps directly; Section~\ref{sec:nft-link} makes the connection to NFT explicit.

\section{MeanFlowAdvantage}
\label{sec:method}

The preliminaries establish AdvantageFlow in prediction space and MeanFlow in average-velocity space. MeanFlowAdvantage connects these representations while keeping optimization aligned with the finite-interval maps deployed by the sampler. We first derive a shared MeanFlow correction that links the average velocity $u$ to the prediction-space objective, and show that the resulting quadratic is exactly an advantage-weighted MeanFlow regression with rollout and reference anchors on the deployed maps. We then specify the samples, advantages, and training intervals used for optimization, followed by an analysis of the resulting objective and its fixed point.

\subsection{Linking average velocity to prediction space}
\label{sec:induce}\label{sec:F}\label{sec:cd}

The naive substitution $x_t-t\,u_\theta(x_t,s,t)$ does not give the corresponding $x_0$-space prediction when $s<t$. In rectified flow, $x_t-tv$ depends on the instantaneous velocity at time $t$, whereas $u$ averages velocity over the interval $[s,t]$. The MeanFlow identity~\eqref{eq:mf-identity} relates the two as $v=u+(t-s)\,\mathrm{D}u$, where $\mathrm{D}u=\partial_tu+(\partial_xu)v$ denotes the rate of change of $u$ along the trajectory. This correction therefore links the average-velocity representation to the prediction space used by AdvantageFlow. Given a rollout sample $x_0$, we draw noise $\epsilon$ and an interval $s\le t$, form $x_t$ by~\eqref{eq:xt}, and set $v_t=\epsilon-x_0$. With finite-difference step size $\delta>0$, we estimate the correction once from the rollout network along the sampled direction $v_t$:
\begin{equation}
d=
\frac{
u_{\sold}(x_t+\delta v_t,s,t+\delta)
-
u_{\sold}(x_t-\delta v_t,s,t-\delta)
}{2\delta}
=
\partial_tu_{\sold}
+
(\partial_xu_{\sold})\,v_t
+
O(\delta^2).
\label{eq:cd}
\end{equation}
A Jacobian-vector product computes the same quantity without the $O(\delta^2)$ error. Appendix~\ref{app:impl} gives the finite-difference implementation and boundary handling. We detach $d$, with $\sg[\cdot]$ denoting stop-gradient, and share the same value across the learner, rollout model, and reference model, indexed by $k$:
\begin{equation}
V_k=u_k(x_t,s,t)+(t-s)\,\sg[d],\qquad
F_k=x_t-t\,V_k,\qquad
k\in\{\theta,\sold,\sref\}.
\label{eq:V}
\end{equation}
Here $V_k$ is the corrected instantaneous-velocity representation and $F_k$ its corresponding $x_0$-space prediction. As the same correction is used for all three networks, it cancels in every pairwise difference:
\begin{equation}
F_\theta-F_k=-t\,(u_\theta-u_k),\qquad
V_\theta-V_k=u_\theta-u_k,\qquad
k\in\{\sold,\sref\}.
\label{eq:shared-cd}
\end{equation}
The rollout and reference anchors therefore compare the average-velocity networks directly on the sampled interval $(s,t)$. With a separate derivative for each network, each anchor residual would instead contain an additional term $t(t-s)(d_\theta-d_k)$ depending on the sampled direction $v_t$; Section~\ref{sec:analysis} shows why this matters.

\subsection{Advantage-weighted least squares on the induced prediction}
\label{sec:obj}

With the prediction~\eqref{eq:V}, the MFA loss is the AdvantageFlow quadratic~\eqref{eq:af-loss}:
\begin{equation}
 \ell_\theta=A\|F_\theta-x_0\|_2^2+\gamma\|F_\theta-F_{\sold}\|_2^2+\lambda\|F_\theta-F_{\sref}\|_2^2,
 \qquad \mathcal L_{\mathrm{MFA}}(\theta)=\mathbb E_{x_0\sim\piold}[\ell_\theta].
 \label{eq:mfa-loss}
\end{equation}
Within one update, the samples, the advantages, and the two anchor predictions are held fixed.

\textbf{MFA is an advantage-weighted MeanFlow regression.} Denote the regression target of MeanFlow training~\citep{geng2025meanflow} by
\begin{equation}
 \bar u=v_t-(t-s)\,d .
 \label{eq:ubar}
\end{equation}
Since $x_t-x_0=t\,v_t$, we have $F_\theta-x_0=-t(u_\theta-\bar u)$. Combined with~\eqref{eq:shared-cd}, this turns the loss into an exact identity in average-velocity space:
\begin{equation}
 \ell_\theta=t^2\Bigl(A\,\|u_\theta-\bar u\|_2^2+\gamma\,\|u_\theta-u_{\sold}\|_2^2+\lambda\,\|u_\theta-u_{\sref}\|_2^2\Bigr).
 \label{eq:mfa-u}
\end{equation}
The data term is the standard MeanFlow loss weighted by the advantage, and the anchors keep the deployed map close to the rollout and reference maps on the sampled interval. The prediction $F$ is only a device to derive~\eqref{eq:mfa-u}; what is trained and regularized is $u_\theta$ itself. When $s=t$, the correction vanishes and~\eqref{eq:mfa-u} reduces to AdvantageFlow on the boundary velocity $u_\theta(x_t,t,t)$ (Appendix~\ref{app:reparam}). The positive-length intervals are therefore what is new for few-step generators.

\textbf{Per-sample solution and signed advantages.} Treat $u=u_\theta(x_t,s,t)$ as a free vector. The loss~\eqref{eq:mfa-u} has curvature $2t^2(A+\gamma+\lambda)$, so whenever $A+\gamma+\lambda>0$ it is strictly convex and has the unique minimizer
\begin{equation}
 u^\star=\frac{A\,\bar u+\gamma\,u_{\sold}+\lambda\,u_{\sref}}{A+\gamma+\lambda}.
 \label{eq:F-star}
\end{equation}
The three weights sum to one. For $A>0$, $u^\star$ moves toward the target $\bar u$. For $A<0$, the weight on $\bar u$ is negative, so $u^\star$ is pushed away from the target, past the rollout output. A poor sample therefore produces a push away from itself instead of being ignored, and the loss remains convex. We use $\gamma=1.1$, $\lambda=10^{-3}$, and $A\in[-1,1]$, so $A+\gamma+\lambda\ge0.101$ always holds. Section~\ref{sec:analysis} shows that $\gamma$ also controls how far each update can move.

\textbf{Adaptive scale.} As in the DiffusionNFT trainer, the image implementation divides each sample's loss by a detached positive scale:
\begin{equation}
 \mathcal L_{\mathrm{train}}=\mathbb E[\ell_\theta/w],\qquad w=\max\bigl(\sg[\operatorname{mean}|F_\theta-x_0|],10^{-5}\bigr),
 \label{eq:scale}
\end{equation}
where the mean is over latent coordinates. The three terms of a sample share the same $w$, so $w$ does not change the per-sample minimizer~\eqref{eq:F-star}. It only changes how much each sample contributes to the gradient. The ablations show that this matters for stability (Section~\ref{sec:ablation}). The analysis below is stated for the unscaled loss.

\subsection{Samples, advantages, and intervals}
\label{sec:adv}\label{sec:tr}\label{sec:rollout}

We keep the outer loop of MeanFlowNFT~\citep{huang2026meanflownft}, so any difference from it comes from the objective alone.

\textbf{Rollouts.} Starting from Gaussian noise, the frozen rollout model runs the $N$-step sampler on a decreasing grid $1=t_N>\cdots>t_0=0$:
\begin{equation}
 x_{t_{i-1}}=x_{t_i}-(t_i-t_{i-1})\,u_{\sold}(x_{t_i},t_{i-1},t_i),\qquad i=N,\ldots,1,
 \label{eq:nstep}
\end{equation}
with $N=4$ and no CFG. The rollout parameters are an EMA of the learner and are refreshed between updates. The reference is the frozen AnyFlow initialization~\citep{gu2026anyflow}. Fine-tuning uses LoRA~\citep{hu2022lora}.

\textbf{Advantages.} For $L$ prompts with $K$ images each, rewards are centered within each prompt and divided by a global standard deviation:
\begin{equation}
 A^{i,k}=\operatorname{clip}\Bigl(\frac{R^{i,k}-\widehat R^{i}}{\max(Z,\varepsilon_Z)},-1,1\Bigr),\qquad
 \widehat R^{i}=\frac1K\sum_{k}R^{i,k},\qquad Z^2=\frac{1}{LK}\sum_{i,k}\bigl(R^{i,k}-\widehat R^{i}\bigr)^2.
 \label{eq:adv}
\end{equation}
With several rewards, each is standardized separately and the results are combined with fixed weights before clipping. The advantages are signed: an image worse than its prompt average receives $A<0$.

\textbf{Intervals.} A few-step sampler takes jumps of positive length, so training must include such intervals. Each sample receives one of three interval types:
\begin{equation}
 s=t\ \text{(boundary)},\qquad s=0\ \text{(full jump to data)},\qquad 0<s<t\ \text{(general interval)},
 \label{eq:three-mode}
\end{equation}
with probabilities $(0.5,0.25,0.25)$ and the time shift of the base scheduler. Boundary samples keep the instantaneous velocity accurate. The other two types train the maps that the sampler actually runs. Algorithm~\ref{alg:mfa} in Appendix~\ref{app:impl} summarizes one update.

\section{What does MeanFlowAdvantage learn?}
\label{sec:analysis}

The identity~\eqref{eq:mfa-u} holds exactly for every sample. We now take expectations and answer three questions. Which function does the loss drive the network toward? How large is each update, differing from MeanFlowNFT? When does the few-step sampler inherit the improvement? The analysis is at the population level, with a free function in place of the network. We write $Z=(x_t,s,t,c)$ for the network input and treat the advantage as $A=A(x_0,c)$. Proofs are in Appendix~\ref{app:proofs}.

\paragraph{Assumptions.} We use three assumptions: \emph{(A1) Exact correction:} $d=\partial_tu_{\sold}+(\partial_xu_{\sold})v_t$, which a JVP computes exactly while~\eqref{eq:cd} has error up to $O(\delta^2)$ ; \emph{(A2) Positive weights:} $A+\gamma>0$ for every sample; and \emph{(A3) Calibrated rollout:} $\mathbb E_{\piold}[\bar u\mid Z]=u_{\sold}(Z)$. Assumption (A3) means that the rollout network is a MeanFlow fixed point on its own samples. Under (A1), this corresponds to the MeanFlow identity~\eqref{eq:mf-identity} for the velocity field of $\piold$. It is exact only for a perfectly calibrated rollout; we discuss this gap at the end of the section.

For each prompt $c$, define the reweighted distribution
\begin{equation}
 q(x_0\mid c)=\frac{\gamma+A(x_0,c)}{\gamma+\bar A_c}\;\piold(x_0\mid c),\qquad \bar A_c=\mathbb E_{\piold}[A\mid c].
 \label{eq:tilt}
\end{equation}
$q$ upweights positive-advantage samples and downweights negative ones; smaller $\gamma$ strengthens the reweighting. Let $v_q(x_t,t,c)=\mathbb E_q[\epsilon-x_0\mid x_t,t,c]$ be the rectified-flow velocity that transports noise to $q$.

\begin{theorem}[MFA fits the MeanFlow target of a reward-reweighted distribution]
\label{thm:tilt}
Assume (A1)--(A3). There is a constant $C$ that does not depend on $\theta$ such that
\begin{equation}
 \mathcal L_{\mathrm{MFA}}(\theta)\big|_{\lambda=0}
 =\mathbb E_c\Bigl[(\gamma+\bar A_c)\,\mathbb E_{q}\bigl[t^2\|u_\theta-\bar u\|_2^2\,\big|\,c\bigr]\Bigr]+C .
 \label{eq:thm-tilt}
\end{equation}
For any $\lambda\ge0$, the loss has a unique minimizer over free functions $u(Z)$:
\begin{equation}
 u^{+}=\alpha\,u_q+(1-\alpha)\,u_{\sref},\qquad
 \alpha=\frac{\gamma+\mathbb E_{\piold}[A\mid Z]}{\gamma+\lambda+\mathbb E_{\piold}[A\mid Z]},
 \label{eq:thm-tilt-min}
\end{equation}
where $u_q=v_q-(t-s)\bigl[\partial_tu_{\sold}+(\partial_xu_{\sold})\,v_q\bigr]$.
\end{theorem}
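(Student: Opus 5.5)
The plan is to work entirely in the average-velocity form~\eqref{eq:mfa-u}, which holds exactly for every sample. I treat $u=u_\theta(Z)$ as a free function of $Z=(x_t,s,t,c)$. The expectation is over $c$, $x_0\sim\piold(\cdot\mid c)$, $\epsilon$ and the interval $(s,t)$, and the last three are drawn independently of $x_0$. The one structural fact I use throughout is that $u$, $u_{\sold}$ and $u_{\sref}$ depend only on $Z$. The target $\bar u=v_t-(t-s)d$, by contrast, depends on $(x_0,\epsilon,s,t)$. By (A1) it is \emph{affine} in $v_t$ given $Z$:
\[
\bar u=v_t-(t-s)\bigl[\partial_tu_{\sold}+(\partial_xu_{\sold})v_t\bigr],
\]
where the derivatives of $u_{\sold}$ are evaluated at $Z$.

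\textbf{Step 1: the $\lambda=0$ identity.} I expand the rollout anchor around the target:
\[
\|u-u_{\sold}\|^2=\|u-\bar u\|^2+2\langle u-u_{\sold},\bar u-u_{\sold}\rangle-\|\bar u-u_{\sold}\|^2 .
\]
Conditioning on $Z$ and using (A3), $\mathbb E_{\piold}[\bar u\mid Z]=u_{\sold}(Z)$, the cross term vanishes because $u-u_{\sold}$ is $Z$-measurable. This gives
\[
\mathcal L_{\mathrm{MFA}}\big|_{\lambda=0}=\mathbb E_{\piold}\bigl[t^2(A+\gamma)\|u_\theta-\bar u\|^2\bigr]+C,\qquad C=-\gamma\,\mathbb E\bigl[t^2\|\bar u-u_{\sold}\|^2\bigr],
\]
and $C$ is independent of $\theta$. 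I then rewrite the first expectation per prompt. The factor $\gamma+A(x_0,c)$ depends only on $(x_0,c)$ and is positive by (A2), and $(\epsilon,s,t)$ are independent of $x_0$. Hence multiplying the joint density by $(\gamma+A)/(\gamma+\bar A_c)$ changes only the $x_0$-marginal, from $\piold$ to $q$ in~\eqref{eq:tilt}. Pulling out the normalizer $\gamma+\bar A_c$ yields~\eqref{eq:thm-tilt}.

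\textbf{Step 2: the pointwise minimizer for general $\lambda$.} Conditioning on $Z$, the loss becomes $\mathbb E_Z\bigl[t^2 J_Z(u(Z))\bigr]$ plus a constant, where
\[
J_Z(u)=\mathbb E_{\piold}\bigl[(A+\gamma)\|u-\bar u\|^2\mid Z\bigr]+\lambda\|u-u_{\sref}\|^2 .
\]
This is a quadratic in the vector $u$ with curvature $2(\gamma+\lambda+\mathbb E[A\mid Z])$, which is positive by (A2). Its minimizer is therefore unique:
\[
u^+=\frac{\mathbb E_{\piold}[(\gamma+A)\bar u\mid Z]+\lambda u_{\sref}}{\gamma+\lambda+\mathbb E_{\piold}[A\mid Z]} .
\]
Uniqueness as a function holds up to null sets in $Z$, using $t>0$ almost surely.

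\textbf{Step 3: identifying the numerator with $u_q$.} This is the step I expect to be the delicate one. I need the conditional change-of-measure identity
\[
\mathbb E_{\piold}[(\gamma+A)g\mid Z]=(\gamma+\mathbb E_{\piold}[A\mid Z])\,\mathbb E_q[g\mid Z],
\]
where the right-hand conditional is taken under the joint law with $x_0\sim q$. I would verify it with Bayes' rule on the joint density of $(x_0,\epsilon,s,t)$ given $c$. The $q$-posterior of $x_0$ given $Z$ is proportional to $(\gamma+A)\times$ the $\piold$-posterior, with normalizer $\gamma+\mathbb E_{\piold}[A\mid Z]$. The factor $(\gamma+\bar A_c)$ cancels because it is constant given $c$.

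Applying this identity with $g=\bar u$, and using that $\bar u$ is affine in $v_t$ given $Z$, I get
\[
\mathbb E_q[\bar u\mid Z]=\mathbb E_q[v_t\mid Z]-(t-s)\bigl[\partial_tu_{\sold}+(\partial_xu_{\sold})\,\mathbb E_q[v_t\mid Z]\bigr].
\]
Since $\mathbb E_q[v_t\mid Z]=v_q(x_t,t,c)$, this equals $u_q$. Substituting into Step 2 gives $u^+=\alpha u_q+(1-\alpha)u_{\sref}$ with $\alpha$ as in~\eqref{eq:thm-tilt-min}.

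The two points to check carefully are these. First, $\mathbb E_q[v_t\mid x_t,s,t,c]$ does not depend on $s$, because $s$ is sampled independently of $(x_0,\epsilon)$ given $t$. Second, linearity in $v_t$ genuinely requires the exact JVP form in (A1); the finite-difference estimate~\eqref{eq:cd} would break this step.
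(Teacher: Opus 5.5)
Your proposal is correct and follows the paper's proof essentially step for step. It uses the same (A3) cross-term cancellation to turn the rollout anchor into a MeanFlow regression with $C=-\gamma\,\mathbb E[t^2\|u_{\sold}-\bar u\|_2^2]$, and the same change of measure to $q$. It then minimizes the same strictly convex conditional quadratic pointwise, and uses the Bayes reweighting together with the affinity of $\bar u$ in $v_t$ under (A1) to identify $\mathbb E_q[\bar u\mid Z]=u_q$.

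The only difference is cosmetic. You fold the $\gamma$-anchor into the $(A+\gamma)$-weighted regression before minimizing, whereas the paper minimizes the original conditional quadratic and applies (A3) to the numerator afterwards.
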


For $\lambda=0$, MFA on rollout samples is equivalent to ordinary MeanFlow training under $q$, without sampling from $q$: $u_q$ is the corresponding MeanFlow target with the rollout derivative stop-gradiented. Positive $\lambda$ blends this target with the reference map. The shared correction is essential: because $u_\theta-u_{\sold}$ depends only on $Z$~\eqref{eq:shared-cd}, the rollout anchor becomes a MeanFlow regression up to a constant; separate derivatives introduce dependence on $v_t$ and break this equivalence.

\begin{proposition}[Update direction and step size]
\label{prop:step}
Assume (A1)--(A3), and let $\sigma_A^2(Z)=\operatorname{Var}_{\piold}(A\mid Z)$ and $\sigma_{\bar u}^2(Z)=\operatorname{tr}\operatorname{Cov}_{\piold}(\bar u\mid Z)$. Then
\begin{equation}
\begin{aligned}
u^{+}-u_{\sold}
&=
\frac{\operatorname{Cov}_{\piold}(A,\bar u\mid Z)+\lambda(u_{\sref}-u_{\sold})}
{\gamma+\lambda+\mathbb E_{\piold}[A\mid Z]},
\\[-1pt]
\|u^{+}-u_{\sold}\|_2
&\le
\frac{\sigma_A\sigma_{\bar u}+\lambda\|u_{\sref}-u_{\sold}\|_2}
{\gamma+\lambda+\mathbb E_{\piold}[A\mid Z]}.
\end{aligned}
\label{eq:step}
\end{equation}
\end{proposition}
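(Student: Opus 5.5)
We start from the pointwise form of the population minimizer and subtract $u_{\sold}$. Fix $Z$ and work conditionally on it. Under (A1), equation~\eqref{eq:mfa-u} holds exactly. The anchors $u_{\sold}(Z)$ and $u_{\sref}(Z)$ are functions of $Z$ alone, so minimizing $\mathbb E_{\piold}[\ell\mid Z]$ over a free vector $u$ is a pointwise quadratic problem. Setting the gradient to zero gives
\[
u^{+}=\frac{\mathbb E_{\piold}[A\bar u\mid Z]+\gamma u_{\sold}+\lambda u_{\sref}}{\mathbb E_{\piold}[A\mid Z]+\gamma+\lambda}.
\]
This agrees with the formula for $u^{+}$ in Theorem~\ref{thm:tilt}, which we may take as given. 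We nevertheless rederive it in this explicit form because that makes the subtraction transparent. Positivity of the denominator follows from (A2): $\mathbb E[A\mid Z]+\gamma>0$ and $\lambda\ge0$.

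Next we subtract $u_{\sold}$, putting everything over the common denominator $D=\gamma+\lambda+\mathbb E[A\mid Z]$. The numerator becomes
\[
\mathbb E[A\bar u\mid Z]-\mathbb E[A\mid Z]\,u_{\sold}+\lambda(u_{\sref}-u_{\sold}).
\]
By (A3) we have $u_{\sold}=\mathbb E[\bar u\mid Z]$. Hence
\[
\mathbb E[A\bar u\mid Z]-\mathbb E[A\mid Z]\,\mathbb E[\bar u\mid Z]=\operatorname{Cov}_{\piold}(A,\bar u\mid Z),
\]
which is the first line of~\eqref{eq:step}. The key point is that calibration turns the product of expectations into a covariance. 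Without (A3) there would be an extra bias term $\mathbb E[A\mid Z](\mathbb E[\bar u\mid Z]-u_{\sold})$.

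For the bound, we apply the triangle inequality to the numerator and use $D>0$. This reduces the claim to
\[
\|\operatorname{Cov}(A,\bar u\mid Z)\|_2\le\sigma_A\sigma_{\bar u}.
\]
Write the covariance as $\mathbb E[(A-\bar A)(\bar u-\mathbb E\bar u)\mid Z]$ with scalar $A$. By vector Cauchy--Schwarz, either through Jensen followed by Cauchy--Schwarz or by testing against a unit vector $e$,
\[
\langle e,\operatorname{Cov}\rangle=\mathbb E[(A-\bar A)\langle e,\bar u-\mathbb E\bar u\rangle]\le\sigma_A\sqrt{e^\top\operatorname{Cov}(\bar u)e}\le\sigma_A\sqrt{\operatorname{tr}\operatorname{Cov}(\bar u)}.
\]
Taking the supremum over $e$ gives the bound.

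The main obstacle is only bookkeeping. We need the conditional minimization to be legitimate, which holds because the anchors depend on $Z$ alone thanks to the shared correction~\eqref{eq:shared-cd}. We also need $D>0$ to hold pointwise. Both are immediate from (A1)--(A2), so the proof is short.
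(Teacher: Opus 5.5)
Your proposal is correct and matches the paper's proof essentially step for step. Both subtract $u_{\sold}$ from the pointwise minimizer, use (A3) to turn $\mathbb E[A\bar u\mid Z]-\mathbb E[A\mid Z]\,u_{\sold}$ into $\operatorname{Cov}(A,\bar u\mid Z)$, bound the covariance norm by Cauchy--Schwarz against unit vectors, and finish with the triangle inequality and positivity of the denominator from (A2). One small addition: uniqueness of the conditional minimizer also requires $t>0$, because the per-$Z$ loss carries the factor $t^2$.
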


The update direction is the covariance between the advantage and the MeanFlow target. Among all images that are consistent with the noisy input $Z$, the map moves toward the targets of high-advantage images and away from those of low-advantage images. The step size scales as $1/\gamma$, so $\gamma$ acts as an inverse step size, or trust-region radius, on the deployed map. The step is also bounded by the spread of advantages at $Z$: if all images consistent with $Z$ are equally good, nothing moves. A constant advantage $A\equiv1$ has zero covariance, so for $\lambda=0$ it gives $u^+=u_{\sold}$. It carries no reward information, and the loss only refits the rollout samples.

\begin{proposition}[Relation to MeanFlowNFT]
\label{prop:nft}
Apply the NFT loss~\citep{zheng2026diffusionnft} to the induced velocity~\eqref{eq:V} as in MeanFlowNFT, with parameter $\beta>0$, optimality probability $r\in[0,1]$, no reference term, and no adaptive weights. Under (A1) and (A3), its minimizer over free functions is
\begin{equation}
 u^{+}_{\mathrm{NFT}}=u_{\sold}+\frac{1}{\beta}\operatorname{Cov}_{\piold}(A,\bar u\mid Z),\qquad A=2r-1.
 \label{eq:nft-min}
\end{equation}
\end{proposition}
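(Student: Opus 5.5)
The plan is to rewrite the NFT loss in average-velocity space using the shared correction, as was done for~\eqref{eq:mfa-u}, and then minimize it pointwise in $Z$, as in the proof of Theorem~\ref{thm:tilt}. I recall the NFT loss of~\citet{zheng2026diffusionnft} in the form used by MeanFlowNFT. For a learner velocity $V_\theta$, the implicit positive and negative policies are
\[
V^{+}_\theta=(1-\beta)V_{\sold}+\beta V_\theta,\qquad V^{-}_\theta=(1+\beta)V_{\sold}-\beta V_\theta ,
\]
and the loss is
\[
\mathbb E\bigl[r\|V^{+}_\theta-v_t\|_2^2+(1-r)\|V^{-}_\theta-v_t\|_2^2\bigr],
\]
with $r=r(x_0,c)$ treated as fixed within an update. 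No reference term and no adaptive weights are present. Here $V_\theta$ and $V_{\sold}$ are the induced velocities~\eqref{eq:V}, and both use the same detached $d$.

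\textbf{Step 1: reduce the residuals to $u$-space.} By~\eqref{eq:V} and~\eqref{eq:ubar}, and because the same $d$ appears in both, we have
\[
V_k-v_t=u_k+(t-s)d-\bigl(\bar u+(t-s)d\bigr)=u_k-\bar u
\]
for each $k$. Hence $V^{+}_\theta-v_t=u_{\sold}+\beta\delta-\bar u$ and $V^{-}_\theta-v_t=u_{\sold}-\beta\delta-\bar u$, where $\delta=u_\theta(Z)-u_{\sold}(Z)$. Assumption (A1) is used here only so that $\bar u$ is the exact MeanFlow target that appears in (A3). The key point is that $\delta$ depends only on $Z$, while $\bar u$ and $r$ depend on $(x_0,\epsilon)$.

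\textbf{Step 2: pointwise minimization.} Condition on $Z$ and treat $\delta$ as a free vector. The conditional loss is a quadratic in $\delta$ whose Hessian is $2\beta^2\,\mathbb E[r+(1-r)\mid Z]\,I=2\beta^2 I\succ0$. It is therefore strictly convex and has a unique minimizer. Setting its gradient to zero and dividing by $2\beta$ gives
\[
\mathbb E\bigl[r(u_{\sold}+\beta\delta-\bar u)\mid Z\bigr]=\mathbb E\bigl[(1-r)(u_{\sold}-\beta\delta-\bar u)\mid Z\bigr].
\]
Collecting the $\delta$ terms yields $\beta\delta=\mathbb E[(2r-1)(\bar u-u_{\sold})\mid Z]$.

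\textbf{Step 3: identify the covariance.} Set $A=2r-1$. By (A3), $\mathbb E_{\piold}[\bar u\mid Z]=u_{\sold}(Z)$, so the right-hand side $\mathbb E[A(\bar u-u_{\sold})\mid Z]$ equals $\operatorname{Cov}_{\piold}(A,\bar u\mid Z)$. Substituting back gives~\eqref{eq:nft-min}.

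The only delicate step is the first one. There I must make sure the correction really cancels between $V^{\pm}_\theta$ and the target $v_t$, not just between pairs of networks as in~\eqref{eq:shared-cd}. This works because the coefficients $(1-\beta)+\beta$ and $(1+\beta)-\beta$ both equal $1$, so each implicit policy carries exactly one copy of $(t-s)d$, which matches the one contained in $v_t$. Everything after that is a scalar-weighted least-squares computation.
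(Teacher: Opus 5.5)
Your proof is correct and follows the paper's argument: the shared correction turns the two NFT residuals into $u_{\sold}-\bar u\pm\beta\Delta$ with $\Delta=u_\theta-u_{\sold}$ depending only on $Z$, pointwise minimization gives $\beta\Delta=\mathbb E[A(\bar u-u_{\sold})\mid Z]$, and (A3) turns this into the conditional covariance. The only difference is cosmetic: the paper first expands the loss as $\beta^2\|\Delta\|_2^2+2\beta A\langle\Delta,e\rangle+\|e\|_2^2$, with $e=u_{\sold}-\bar u$, before minimizing, whereas you write the first-order condition directly and also check strict convexity.
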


Compared with~\eqref{eq:step} at $\lambda=0$, both objectives move in the same direction, but NFT uses the fixed gain $1/\beta$ while MFA uses $1/(\gamma+\mathbb E_{\piold}[A\mid Z])$. By Theorem~\ref{thm:tilt}, the MFA target at every input is the MeanFlow target of one fixed distribution $q$; the NFT target has this form only if $\mathbb E_{\piold}[A\mid Z]$ does not depend on $Z$. For example, with $A=2r-1$ and $\gamma=1$, $q\propto r\,\piold$ is exactly the positive policy that motivates DiffusionNFT. MFA regresses onto its MeanFlow target $u_{\pi^+}$ everywhere, whereas NFT regresses onto $u_{\sold}+(2\rho/\beta)(u_{\pi^+}-u_{\sold})$ with $\rho(Z)=\mathbb E_{\piold}[r\mid Z]$. That target falls short of $u_{\pi^+}$ where $2\rho<\beta$ and overshoots it where $2\rho>\beta$ (Appendix~\ref{app:nft-proof}). This does not make MFA uniformly better, but its target has a simple distributional meaning at every noise level.

\begin{theorem}[When the sampler inherits the improvement]
\label{thm:transfer}
Let $\lambda=0$, and let $u^\ast$ be a fixed point of the target in Theorem~\ref{thm:tilt} when its own derivative is used:
\begin{equation}
 u^\ast=v_q-(t-s)\bigl[\partial_tu^\ast+(\partial_xu^\ast)\,v_q\bigr]\quad\text{for all }0\le s<t\le1 .
 \label{eq:fixed-point}
\end{equation}
Under the regularity conditions of Lemma~\ref{lem:consistency}, $u^\ast$ is the exact average velocity of the flow of $v_q$. For every number of steps $N$ and every grid, the sampler~\eqref{eq:nstep} with $u^\ast$ therefore draws exact samples from $q$. For each prompt,
\begin{equation}
 \mathbb E_{q}[R\mid c]-\mathbb E_{\piold}[R\mid c]=\frac{\operatorname{Cov}_{\piold}(A,R\mid c)}{\gamma+\bar A_c},
 \label{eq:J-improve}
\end{equation}
which is nonnegative whenever $A$ is a nondecreasing function of a scalar reward $R$.
\end{theorem}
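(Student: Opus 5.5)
The proof has two parts, matching the two claims of Theorem~\ref{thm:transfer}: a sampler-exactness claim and a reward-improvement identity.

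\textbf{Sampler exactness.} I would invoke Lemma~\ref{lem:consistency}, whose regularity conditions are assumed in the statement. Let $\phi_{t\to s}$ be the flow of $v_q$ from time $t$ to time $s$. Define the exact average velocity
\[
\hat u(x,s,t)=\frac{x-\phi_{t\to s}(x)}{t-s}.
\]
By the derivation of~\eqref{eq:mf-identity} with $v=v_q$, $\hat u$ satisfies the fixed-point equation~\eqref{eq:fixed-point}.

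For fixed $s$, \eqref{eq:fixed-point} is a linear first-order transport PDE in $(x,t)$. Its characteristics are the trajectories of $v_q$. Along such a trajectory $x_\tau$, set $g(\tau)=(\tau-s)\,u^\ast(x_\tau,s,\tau)$. The equation becomes
\[
\frac{\mathrm d}{\mathrm d\tau}g(\tau)=v_q(x_\tau,\tau),\qquad g(s)=0,
\]
so $g(t)=\int_s^t v_q\,\mathrm d\tau$. The initial condition $g(s)=0$ requires $u^\ast$ to be bounded near $s=t$; this is what I expect the regularity conditions of the lemma to supply. Hence $u^\ast=\hat u$.

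The main obstacle is exactly this uniqueness step. One must ensure that the characteristics of $v_q$ exist, are unique, and cover the domain, which needs a Lipschitz condition on $v_q$ as in the lemma. Boundedness of $u^\ast$ as $s\to t$ is needed so that $g(s)=0$ holds.

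Given $u^\ast=\hat u$, each sampler step in~\eqref{eq:nstep} is $x\mapsto\phi_{t_i\to t_{i-1}}(x)$. The composition of these steps over any grid is $\phi_{1\to0}$. Rectified flow with velocity $v_q$ transports $\mathcal N(0,I)$ at $t=1$ to $q(\cdot\mid c)$ at $t=0$, since $v_q$ is the marginal velocity for the interpolation~\eqref{eq:xt} under $q$. So the samples are exact for every $N$.

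\textbf{Reward identity.} By the definition~\eqref{eq:tilt},
\[
\mathbb E_q[R\mid c]=\frac{\mathbb E_{\piold}[(\gamma+A)R\mid c]}{\gamma+\bar A_c}.
\]
Subtracting $\mathbb E_{\piold}[R\mid c]=\mathbb E_{\piold}[(\gamma+A)\mid c]\,\mathbb E_{\piold}[R\mid c]/(\gamma+\bar A_c)$ gives
\[
\mathbb E_q[R\mid c]-\mathbb E_{\piold}[R\mid c]=\frac{\mathbb E_{\piold}[AR\mid c]-\bar A_c\,\mathbb E_{\piold}[R\mid c]}{\gamma+\bar A_c}=\frac{\operatorname{Cov}_{\piold}(A,R\mid c)}{\gamma+\bar A_c}.
\]
The denominator is positive by (A2).

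For nonnegativity, suppose $A=g(R)$ with $g$ nondecreasing. Then $\operatorname{Cov}(g(R),R)\ge0$ by the Chebyshev/Harris inequality. Concretely, with $R'$ an independent copy of $R$,
\[
\mathbb E\bigl[(g(R)-g(R'))(R-R')\bigr]\ge0,
\]
and the left side equals $2\operatorname{Cov}(g(R),R)$.
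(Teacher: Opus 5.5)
Your proposal is correct and follows the paper's proof essentially step for step: your $g(\tau)=(\tau-s)\,u^\ast(x_\tau,s,\tau)$ is exactly the $U(\tau)$ in the proof of Lemma~\ref{lem:consistency}, with boundedness supplying the vanishing initial value. The rest also matches the paper: exactness of each sampler step, composition over the grid, the change-of-measure computation of the reward gap, and Chebyshev's association inequality. The only addition, checking that $\hat u$ itself satisfies~\eqref{eq:fixed-point}, is harmless but unnecessary, because the direct integration already identifies $u^\ast$ with the exact average velocity.
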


Condition~\eqref{eq:fixed-point} is the same stop-gradient fixed point that MeanFlow pretraining aims for: the target uses the current network's derivative, and a network that matches its own target is an exact flow map. Theorem~\ref{thm:transfer} has two practical consequences. First, the condition must hold for \emph{every} interval. Boundary training alone ($s=t$) only pins down $v_q$ and leaves the maps used by the sampler unconstrained, which is why~\eqref{eq:three-mode} includes positive-length intervals. Second, the target distribution $q$ does not depend on $N$, so at the fixed point one update improves every sampling budget at once. Together with Proposition~\ref{prop:step}, it also shows the role of $\gamma$: a smaller $\gamma$ gives a larger possible improvement~\eqref{eq:J-improve}, but also larger and less stable steps~\eqref{eq:step}.

\textbf{What the analysis does not cover.} The theory assumes an exact correction (A1), a calibrated rollout (A3), and free functions, whereas training uses finite differences, LoRA, and the sample-dependent scale~\eqref{eq:scale}. With multiple rewards,~\eqref{eq:J-improve} applies only to the combined score. We therefore test the main predictions in Section~\ref{sec:ablation}: the need for the correction and positive-length intervals, the absence of an update under constant advantages, and consistency across sampling budgets. Appendix~\ref{app:vr} gives an additional variance-reduction result.

\section{Experiments}
\label{sec:expts}

We evaluate MFA as on-policy RL on SD3.5-Medium (Sections~\ref{sec:expts-sd35}--\ref{sec:ablation}). Appendix~\ref{sec:expts-dna} applies the same objective to DNA promoters, both as teacher-free on-policy RL for a generator defined on a manifold and as reward-graded distillation.

\begin{table}[t]
\caption{Text-to-image results on SD3.5-Medium. All rows without a mark are copied from Table~1 of MeanFlowNFT~\citep{huang2026meanflownft}, which evaluates at $1024\times1024$ following DiffusionNFT. $^\dagger$Evaluated with our pipeline (same prompts, resolution, seeds, and scorers as MFA). Among few-step models, \textbf{bold} marks the best value and \underline{underline} the second best. PickScore is the raw logit.}
\label{tab:baselines}

\centering
\setlength{\tabcolsep}{4pt}
\renewcommand{\arraystretch}{1.08}

\resizebox{\linewidth}{!}{%
\begin{tabular}{lcccccccc}
\toprule
Method & ImageReward$\uparrow$ & CLIPScore$\uparrow$ & Aesthetic$\uparrow$ & PickScore$\uparrow$ & HPSv2$\uparrow$ & HPSv3$\uparrow$ & GenEval2$\uparrow$ & OCR$\uparrow$ \\
\midrule

\rowcolor{gray!15}
\multicolumn{9}{c}{\textit{Multi-step models} {\color{gray}\textit{(40 steps)}}} \\

SD3.5-M$^\dagger$ & $-0.463$ & $0.240$ & $5.178$ & $20.77$ & $0.208$ & $2.769$ & $0.104$ & $0.134$ \\
\quad + DiffusionNFT$^\dagger$~\citep{zheng2026diffusionnft} & $1.426$ & $0.299$ & $5.492$ & $23.55$ & $0.330$ & $13.893$ & $0.224$ & $0.639$ \\

\midrule
\rowcolor{gray!15}
\multicolumn{9}{c}{\textit{Few-step models} {\color{gray}\textit{(4 steps)}}} \\

DMD~\citep{yin2024dmd} & $0.924$ & $0.284$ & $5.506$ & $22.28$ & $0.287$ & $11.716$ & $0.204$ & $0.400$ \\
CDM~\citep{liu2026cdm} & $1.031$ & $0.282$ & $5.572$ & $22.42$ & $0.298$ & $12.519$ & $0.202$ & $0.323$ \\
AnyFlow$^\dagger$~\citep{gu2026anyflow} & $1.113$ & $0.289$ & $5.420$ & $22.48$ & $0.297$ & $12.069$ & $0.190$ & $0.452$ \\

\cmidrule(lr){1-9}

RTDMD~\citep{huang2026rtdmd} & $1.232$ & $0.278$ & $\underline{6.129}$ & $23.28$ & $\underline{0.327}$ & $\underline{13.925}$ & $0.204$ & $0.297$ \\
Rdm~\citep{fan2026rdm} & $0.724$ & $0.272$ & $5.754$ & $22.07$ & $0.276$ & $11.212$ & $0.162$ & $0.376$ \\
DMD + DiffusionNFT & $0.716$ & $0.284$ & $5.378$ & $21.96$ & $0.271$ & $9.699$ & $0.225$ & $0.487$ \\
CDM + DiffusionNFT & $0.146$ & $0.275$ & $4.834$ & $21.38$ & $0.214$ & $-3.283$ & $0.210$ & $0.330$ \\
AnyFlow + DiffusionNFT & $1.239$ & $0.292$ & $5.949$ & $23.09$ & $0.292$ & $12.138$ & $\underline{0.234}$ & $0.595$ \\
TDM-R1$^\dagger$~\citep{luo2026tdmr1} & $1.444$ & $0.293$ & $5.942$ & $22.80$ & $0.324$ & $13.512$ & $0.219$ & $0.623$ \\
MeanFlowNFT$^\dagger$~\citep{huang2026meanflownft} & $\underline{1.446}$ & $0.296$ & $5.928$ & $\underline{23.50}$ & $\underline{0.327}$ & $13.676$ & $0.218$ & $\underline{0.627}$ \\

\rowcolor{blue!7}
\textbf{MeanFlowAdvantage (ours)} & $\mathbf{1.451}$ & $\mathbf{0.297}$ & $\mathbf{6.300}$ & $\mathbf{23.63}$ & $\mathbf{0.330}$ & $\mathbf{13.933}$ & $\underline{0.229}$ & $\mathbf{0.631}$ \\

\bottomrule
\end{tabular}%
}
\end{table}

\begin{figure}[t]
\centering
\includegraphics[width=0.85\linewidth]{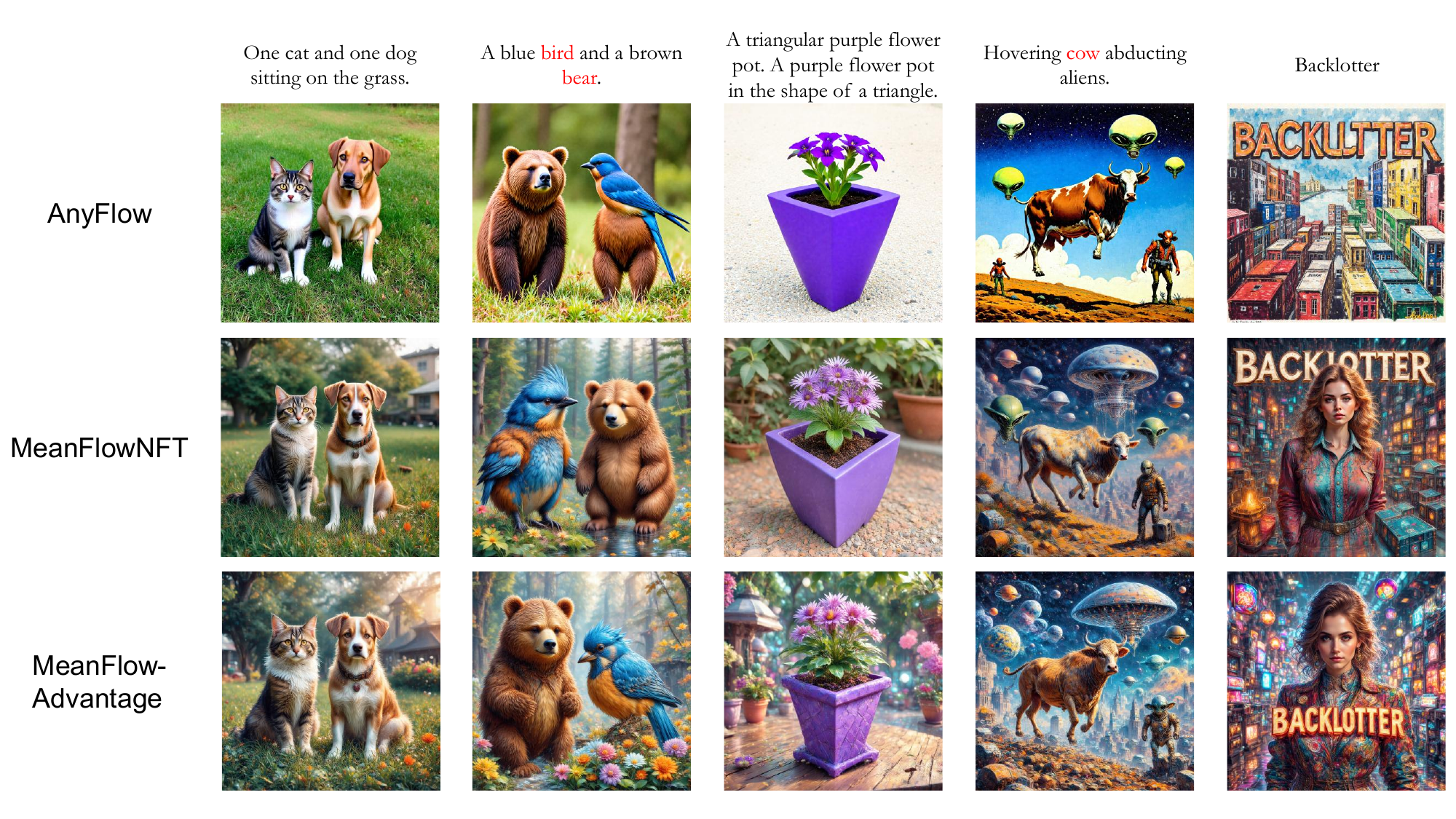}
\vspace{-3mm}
\caption{Qualitative comparison on prompts from GenEval, OCR, and DrawBench.}
\label{fig:qualitative}
\vspace{2mm}
\includegraphics[width=0.85\linewidth]{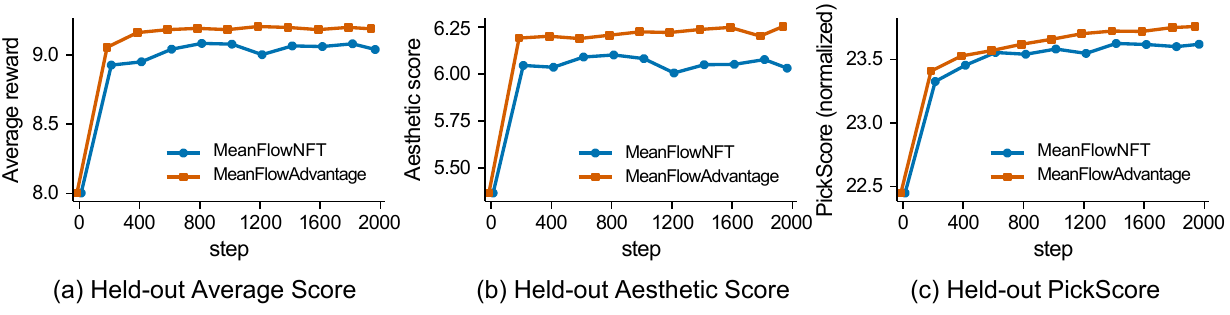}
\vspace{-3mm}
\caption{Held-out DrawBench scores during training, MeanFlowAdvantage versus MeanFlowNFT under the same outer loop.}
\label{fig:image-training}
\end{figure}

\subsection{Text-to-image generation with SD3.5-Medium}
\label{sec:expts-sd35}

\textbf{Setup.} We start from the merged AnyFlow pretraining and on-policy distillation adapters for SD3.5-Medium~\citep{esser2024sd3,gu2026anyflow} and train a fresh LoRA with four-step, CFG-free rollouts. Training uses $512\times512$ images, $L=48$ prompt groups with $K=24$ images each, and AdamW with learning rate $3\times10^{-6}$. PickScore, HPSv2, and CLIPScore are the training rewards, with equal weights~\citep{kirstain2023pickscore,wu2023hpsv2,hessel2021clipscore}. ImageReward, Aesthetic Score, HPSv3, GenEval2, and OCR are evaluated but never optimized~\citep{xu2023imagereward}. DrawBench~\citep{saharia2022imagen} is used for online evaluation. The outer loop and hyperparameters match MeanFlowNFT, and Appendix~\ref{app:expts} lists them all.

\textbf{Main results.} Table~\ref{tab:baselines} compares MFA with the published baselines and with five matched evaluations ($^\dagger$). Under the matched evaluation, MFA improves on both MeanFlowNFT and TDM-R1 on all eight metrics. Relative to MeanFlowNFT, the largest gains are on Aesthetic Score ($+0.37$) and HPSv3 ($+0.26$), neither of which is a training reward, indicating that the improvement is not confined to the optimized objectives. Among all few-step models, MFA ranks first on seven of the eight metrics, with particularly strong results on Aesthetic Score, PickScore, HPSv2, HPSv3, and OCR. With four NFEs, MFA exceeds the 40-step DiffusionNFT on ImageReward, CLIPScore, Aesthetic Score, HPSv2, and HPSv3 and is within $0.01$ on PickScore, at one tenth of the sampling cost. DiffusionNFT remains better on GenEval2 and OCR, both of which its multi-reward setup trains on directly. Qualitatively, MFA more often preserves the requested object count, attribute binding, and text (Figure~\ref{fig:qualitative}). Figure~\ref{fig:image-training} further shows that these gains persist through most of training rather than arising from a single selected checkpoint.

\begin{figure}[t]
\centering

\begin{minipage}{\linewidth}
\centering
\captionof{table}{Any-step evaluation on DrawBench.}
\label{tab:mfa-m2-any-step}
\small
\resizebox{0.75\linewidth}{!}{%
\begin{tabular}{rcccccccc}
\toprule
NFE & IR$\uparrow$ & CLIP$\uparrow$ & Aes$\uparrow$ & Pick$\uparrow$ & HPSv2$\uparrow$ & HPSv3$\uparrow$ & GenEval2$\uparrow$ & OCR$\uparrow$ \\
\midrule
1  & 0.489 & 0.271 & 5.675 & 21.692 & 0.248 & 4.533  & 0.160 & 0.140 \\
2  & 1.377 & 0.288 & $\mathbf{6.359}$ & 23.328 & 0.316 & 12.602 & 0.225 & 0.532 \\
4  & 1.451 & 0.297 & 6.300 & 23.637 & 0.330 & 13.933 & 0.229 & $\mathbf{0.631}$ \\
8  & 1.458 & $\mathbf{0.298}$ & 6.174 & $\mathbf{23.713}$ & $\mathbf{0.333}$ & 14.218 & 0.236 & 0.625 \\
16 & 1.467 & $\mathbf{0.298}$ & 6.084 & 23.679 & 0.332 & $\mathbf{14.255}$ & 0.240 & 0.621 \\
32 & $\mathbf{1.492}$ & 0.297 & 6.045 & 23.646 & 0.332 & 14.207 & $\mathbf{0.250}$ & 0.625 \\
\bottomrule
\end{tabular}}
\end{minipage}

\vspace{2mm}

\begin{minipage}{\linewidth}
\centering
\includegraphics[width=\linewidth]{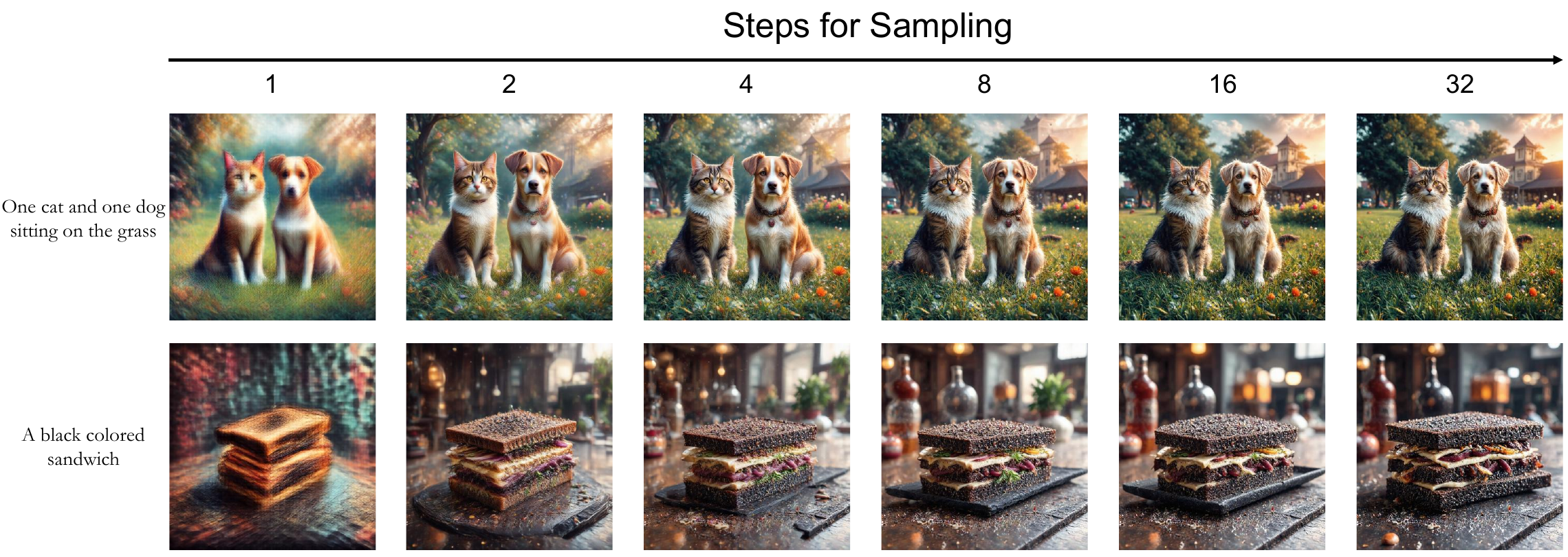}
\captionof{figure}{Qualitative test-time scaling of MeanFlowAdvantage across sampling budgets.}
\label{fig:qualitative_tts}
\end{minipage}

\vspace{2mm}

\begin{minipage}{\linewidth}
\centering
\includegraphics[width=0.85\linewidth]{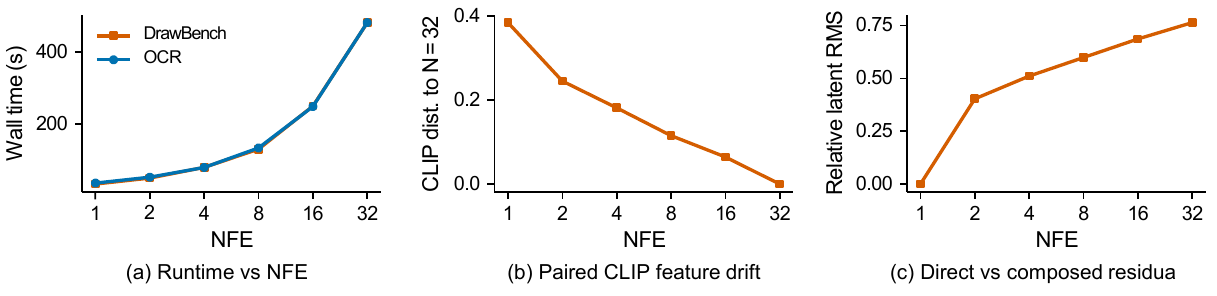}
\captionof{figure}{Any-step behavior of MFA: runtime, drift of samples across budgets, and semigroup residual of the learned flow map.}
\label{fig:any-step-analysis-placeholder}
\end{minipage}

\end{figure}

\textbf{Any-step behavior.} Theorem~\ref{thm:transfer} predicts that, near its fixed point, one model should serve every sampling budget. Table~\ref{tab:mfa-m2-any-step} tests this with the checkpoint trained only with four-step rollouts. One step remains clearly harder. Two steps recover most of the quality, and from four to 32 steps the scores stay on a plateau: ImageReward, GenEval2, and HPSv3 keep improving slightly, while Aesthetic Score peaks at two steps. The model is therefore usable across budgets but is not an exact flow map. The semigroup probe in Figure~\ref{fig:any-step-analysis-placeholder} measures this directly. It compares the direct map $\Phi_{1\to0}$ with compositions of $N$ sub-steps (Appendix~\ref{app:semigroup}); the relative residual grows from $0.40$ at $N=2$ to $0.76$ at $N=32$. The fixed point~\eqref{eq:fixed-point} is thus only approximately reached. Four steps give a good quality-cost trade-off. Figure~\ref{fig:qualitative_tts} shows samples across budgets.

\begin{figure}[t]
\centering
\includegraphics[width=0.85\linewidth]{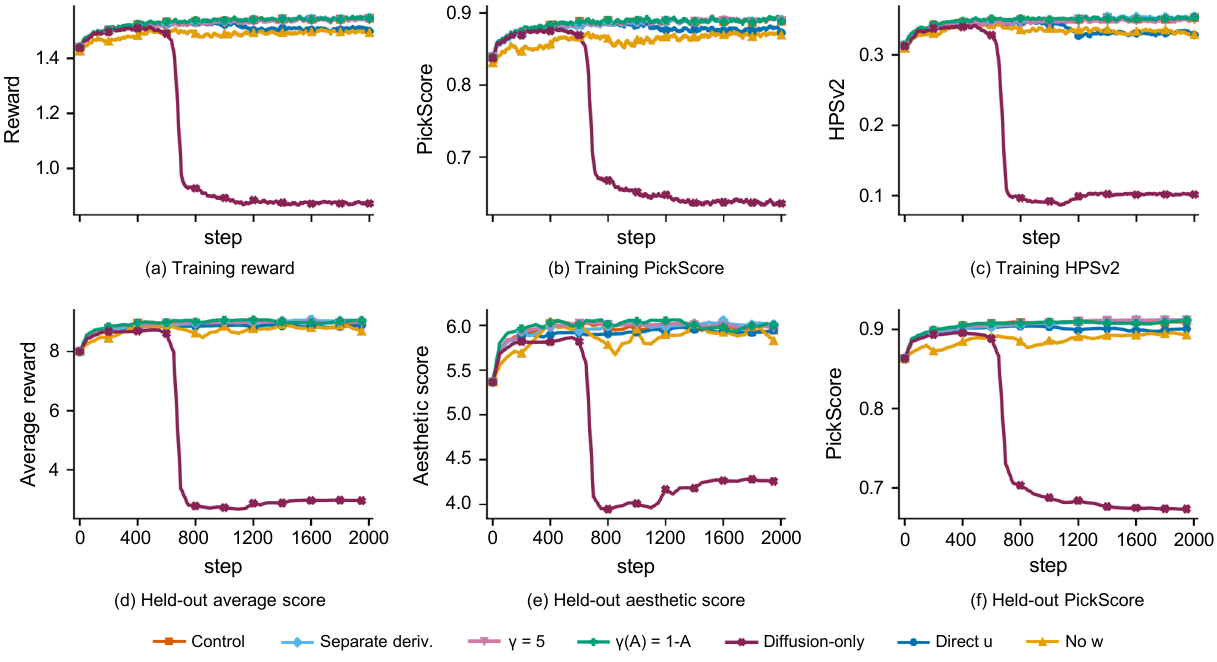}
\caption{Training reward and held-out score of the matched MFA ablations. Boundary-only training collapses late; the positive-only and $A\equiv1$ runs collapse within the first few dozen steps and are stopped at step 300.}
\label{fig:ablation-curves-placeholder}
\end{figure}

\subsection{Testing the analysis on the image model}
\label{sec:ablation}

Each ablation changes one component of MFA under the same SD3.5-Medium initialization, fresh LoRA, 48 prompt groups per update, 2000 steps, and seed 42. Figure~\ref{fig:ablation-curves-placeholder} shows the training curves, and Appendix~\ref{app:image-ablation-table} reports the endpoints (Table~\ref{tab:mfa-ablation}). The results broadly follow the predictions of Section~\ref{sec:analysis}. \textbf{(i) The MeanFlow correction matters.} Removing the correction while keeping positive-length intervals lowers both the training reward ($1.507$ vs.\ $1.549$) and held-out aggregate ($8.89$ vs.\ $8.98$), while roughly doubling the gradient norm. Computing a separate derivative for each network reaches similar quality ($9.04$) but costs 23\% more per step, suggesting that sharing is mainly an efficiency choice in practice, while its theoretical role is to preserve the equivalence in Theorem~\ref{thm:tilt}. \textbf{(ii) Positive-length intervals are necessary.} Boundary-only training constrains the instantaneous velocity but leaves the finite-interval maps used by the sampler unconstrained. The run collapses after about 670 steps, ending with a held-out aggregate of $2.96$ and a gradient norm above $400$, consistent with Theorem~\ref{thm:transfer}. \textbf{(iii) Signed advantages provide the direction.} With $A\equiv1$, Proposition~\ref{prop:step} predicts no reward-directed update, and training collapses within 20 steps; keeping only positive advantages also collapses early (step 22), suggesting that negative samples matter beyond the population reweighting picture. In contrast, increasing $\gamma$ from $1.1$ to $5$ or using $\gamma(A)=1-A$ gives endpoints close to the control, so the gain does not appear to come from a narrowly tuned trust region. \textbf{(iv) Adaptive scaling stabilizes optimization.} Removing $w$ from~\eqref{eq:scale} raises the final gradient norm from $0.97$ to $11.97$ and lowers the held-out aggregate to $8.70$ without immediate collapse. Since $w$ leaves the per-sample minimizer unchanged, its effect is on optimization stability rather than on the target itself.

\subsection{Reward-filtered advantage-weighted promoter fine-tuning}
\label{sec:promoter-main}

We further test MFA on 1024-bp FANTOM5 promoters with a Riemannian MeanFlow generator and negative Sei profile MSE as reward. In teacher-free on-policy RL, MFA lowers the pretrained Sei MSE from $0.0714$ to $0.0482$ over three seeds and is statistically indistinguishable from MeanFlowNFT on the optimized reward, consistent with Proposition~\ref{prop:nft}. For distillation, a frozen 10-step Sei-guided teacher provides improved endpoints; we keep pairs with $\Delta r\ge0.002$ and use $A=\operatorname{clip}(10\Delta r,0,1)$. On identical pairs and advantages, MFA reaches $0.0467$ Sei MSE versus $0.0570$ for MeanFlowNFT, an 18\% reduction. Full configurations and ablations are in Appendix~\ref{sec:expts-dna}.

\section{Conclusion}
\label{sec:conclusion}

We introduced MeanFlowAdvantage, a signed advantage-weighted objective that aligns reward fine-tuning with the average-velocity representation deployed by MeanFlow. A single detached correction shared by the learner, rollout, and reference networks turns the prediction-space quadratic into an exact advantage-weighted MeanFlow regression, while keeping the rollout and reference anchors directly on the deployed finite-interval maps. At the population level, this objective is equivalent to fitting the MeanFlow target of a reward-reweighted distribution; its update direction is governed by an advantage--target covariance, and self-consistent fixed points define exact flow maps that transfer across sampling budgets. Empirically, MFA improves all eight reported metrics over matched MeanFlowNFT on SD3.5-Medium and matches or exceeds the 40-step DiffusionNFT baseline on six of eight with only four NFEs. The same formulation also extends beyond images, supporting both teacher-free RL and reward-graded distillation for DNA promoter generation. Open questions remain around approximate flow-map consistency, principled negative advantages for distillation, and the three-seed sequence-fidelity trend.

\subsection*{AI Use Statement}
Generative AI tools were used to assist with manuscript editing, language refinement, LaTeX formatting, and mathematical consistency checking. They were not used as a substitute for experimental validation or independent verification of the reported results. The authors reviewed and verified all AI-assisted text, derivations, implementation details, references, and experimental claims, and take full responsibility for the final content of the paper.

\subsection*{Ethics statement}
The models inherit biases and failure modes of their pretrained generators and reward models. The promoter experiments evaluate public human genomic sequences computationally; Sei scores are not experimental validation of regulatory function.

\subsection*{Reproducibility Statement}
The main algorithm and optimization objective are specified in Algorithm~\ref{alg:mfa} and Section~\ref{sec:method}. Experimental configurations and implementation details are provided in Appendix~\ref{app:expts} and Appendix~\ref{app:impl}, complete proofs and theoretical assumptions are given in Appendix~\ref{app:proofs}, and additional image-model ablations and promoter evaluations are reported in Appendices~\ref{app:image-ablation-table} and~\ref{app:dna-extra}. The code is available on \href{https://github.com/HaCTang/MeanFlowAdvantage}{https://github.com/HaCTang/MeanFlowAdvantage}, and the model weights are available on \href{https://huggingface.co/Haocheng1/CrystAF}{https://huggingface.co/Haocheng1/CrystAF}.

\bibliography{iclr2027_conference}
\bibliographystyle{iclr2027_conference}
\clearpage

\appendix
\setcounter{figure}{0}
\renewcommand{\thefigure}{S\arabic{figure}}
\renewcommand{\theHfigure}{S\arabic{figure}}
\setcounter{table}{0}
\renewcommand{\thetable}{S\arabic{table}}
\renewcommand{\theHtable}{S\arabic{table}}

\section{Proofs for Section~\ref{sec:analysis}}
\label{app:proofs}\label{app:analysis-extra}

\textbf{Setting.} Throughout, $c$ is drawn from the prompt distribution, $x_0\sim\piold(\cdot\mid c)$, $\epsilon\sim\mathcal N(0,I)$, and $(s,t)$ is drawn independently of $(x_0,\epsilon)$. We write $x_t=(1-t)x_0+t\epsilon$, $v_t=\epsilon-x_0$, and $Z=(x_t,s,t,c)$. The advantage is a function $A=A(x_0,c)$. Under (A1), $d=D(Z)+J(Z)\,v_t$ with $D=\partial_tu_{\sold}(Z)$ and $J=\partial_xu_{\sold}(Z)$; both are functions of $Z$ only. All second moments are assumed finite. Conditional expectations under $\piold$ are written $\mathbb E[\cdot\mid Z]$.

\subsection{Average-velocity form of the loss and boundary reduction}
\label{app:reparam}\label{app:reduce}

By~\eqref{eq:V}, $F_\theta=x_t-tu_\theta-t(t-s)d$. Since $x_t-x_0=t\,v_t$,
\begin{equation}
F_\theta-x_0=t\,v_t-t\,u_\theta-t(t-s)d=-t\bigl(u_\theta-\bar u\bigr),\qquad \bar u=v_t-(t-s)d .
\end{equation}
The anchor residuals follow from~\eqref{eq:V} because the correction terms cancel: $F_\theta-F_k=-t(u_\theta-u_k)$. Substituting both into~\eqref{eq:mfa-loss} gives~\eqref{eq:mfa-u}. This identity holds for any value of $d$, including the finite difference. At $s=t$, the factor $t-s$ vanishes, so $\bar u=v_t$ and $F_k=x_t-tu_k(x_t,t,t)$ for every network. The loss~\eqref{eq:mfa-u} then equals the AdvantageFlow loss~\eqref{eq:af-loss} applied to the boundary predictor $x_t-tu_\theta(x_t,t,t)$, for the same samples, advantages, and coefficients. The scaled loss~\eqref{eq:scale} reduces in the same way when the comparison uses the same denominator.

\subsection{Proof of Theorem~\ref{thm:tilt}}
\label{app:tilt-proof}

\textbf{Step 1: the rollout anchor is a MeanFlow regression.} Let $g(Z)$ be any function of $Z$. Expanding the square and using (A3),
\begin{equation}
\mathbb E\bigl[\|g-\bar u\|_2^2\mid Z\bigr]
=\|g-u_{\sold}\|_2^2+\mathbb E\bigl[\|u_{\sold}-\bar u\|_2^2\mid Z\bigr]
+2\bigl\langle g-u_{\sold},\,u_{\sold}-\mathbb E[\bar u\mid Z]\bigr\rangle ,
\end{equation}
and the last term is zero. By~\eqref{eq:shared-cd}, the learner output $u_\theta(Z)$ is such a function, so
\begin{equation}
t^2\|u_\theta-u_{\sold}\|_2^2=\mathbb E\bigl[t^2\|u_\theta-\bar u\|_2^2\mid Z\bigr]-\mathbb E\bigl[t^2\|u_{\sold}-\bar u\|_2^2\mid Z\bigr].
\label{eq:anchor-as-regression}
\end{equation}
Taking expectations in~\eqref{eq:mfa-u} with $\lambda=0$ gives
\begin{equation}
\mathcal L_{\mathrm{MFA}}(\theta)=\mathbb E\bigl[(A+\gamma)\,t^2\|u_\theta-\bar u\|_2^2\bigr]+C,\qquad C=-\gamma\,\mathbb E\bigl[t^2\|u_{\sold}-\bar u\|_2^2\bigr].
\end{equation}

\textbf{Step 2: change of measure.} Let $h(x_0,\epsilon,s,t,c)=t^2\|u_\theta(Z)-\bar u\|_2^2$. Because $A$ depends only on $(x_0,c)$ and $(\epsilon,s,t)$ are independent of $x_0$,
\begin{equation}
\mathbb E\bigl[(A+\gamma)h\bigr]
=\mathbb E_c\Bigl[\int(\gamma+A)\,\piold(x_0\mid c)\,\mathbb E_{\epsilon,s,t}[h]\,\mathrm dx_0\Bigr]
=\mathbb E_c\Bigl[(\gamma+\bar A_c)\,\mathbb E_{q}\bigl[h\mid c\bigr]\Bigr],
\end{equation}
where $q$ is defined in~\eqref{eq:tilt}. By (A2), $q$ is a valid probability distribution. This proves~\eqref{eq:thm-tilt}.

\textbf{Step 3: minimizer.} For $\lambda\ge0$, the conditional loss given $Z$ is
\begin{equation}
t^2\Bigl(\mathbb E\bigl[A\|u-\bar u\|_2^2\mid Z\bigr]+\gamma\|u-u_{\sold}\|_2^2+\lambda\|u-u_{\sref}\|_2^2\Bigr),
\end{equation}
a quadratic in $u$ with Hessian $2t^2(\mathbb E[A\mid Z]+\gamma+\lambda)I$. This is positive definite for $t>0$ by (A2). Its unique minimizer is
\begin{equation}
u^+=\frac{\mathbb E[A\bar u\mid Z]+\gamma u_{\sold}+\lambda u_{\sref}}{\mathbb E[A\mid Z]+\gamma+\lambda}.
\label{eq:app-min}
\end{equation}
By (A3), $\mathbb E[A\bar u\mid Z]+\gamma u_{\sold}=\mathbb E[(A+\gamma)\bar u\mid Z]$. Under $q$, the joint density of $(x_0,\epsilon,s,t)$ given $c$ is $(\gamma+A)/(\gamma+\bar A_c)$ times that under $\piold$. By Bayes' rule, the conditional law given $Z$ is reweighted by $\gamma+A$; the normalizer $\gamma+\bar A_c$ depends only on $c$, which is part of $Z$. Hence
\begin{equation}
\mathbb E[(A+\gamma)\bar u\mid Z]=\mathbb E[A+\gamma\mid Z]\;\mathbb E_q[\bar u\mid Z].
\end{equation}
Since $D$ and $J$ are functions of $Z$, $\mathbb E_q[\bar u\mid Z]=\mathbb E_q[v_t\mid Z]-(t-s)\bigl(D+J\,\mathbb E_q[v_t\mid Z]\bigr)$. Because $(s,t)$ is independent of $(x_0,\epsilon)$, $\mathbb E_q[v_t\mid Z]=v_q(x_t,t,c)$, so $\mathbb E_q[\bar u\mid Z]=u_q$. Substituting into~\eqref{eq:app-min} gives $u^+=\alpha u_q+(1-\alpha)u_{\sref}$ with the stated $\alpha$. \hfill$\square$

\textbf{Why the correction must be shared.} Suppose the learner and the rollout used separate corrections $d_\theta$ and $d_{\sold}$. The rollout residual would be $u_\theta-u_{\sold}+(t-s)(d_\theta-d_{\sold})$, and under (A1) $d_\theta-d_{\sold}=\partial_t(u_\theta-u_{\sold})+\partial_x(u_\theta-u_{\sold})\,v_t$. This residual depends on $v_t$ and not only on $Z$, so Step~1 fails: the cross term no longer vanishes, and the anchor is no longer a MeanFlow regression on rollout samples.

\subsection{Proof of Proposition~\ref{prop:step}}
\label{app:step-proof}

Subtracting $u_{\sold}$ from~\eqref{eq:app-min},
\begin{equation}
u^+-u_{\sold}=\frac{\mathbb E[A\bar u\mid Z]-\mathbb E[A\mid Z]\,u_{\sold}+\lambda(u_{\sref}-u_{\sold})}{\mathbb E[A\mid Z]+\gamma+\lambda}.
\end{equation}
By (A3), $\mathbb E[A\bar u\mid Z]-\mathbb E[A\mid Z]\,\mathbb E[\bar u\mid Z]=\operatorname{Cov}(A,\bar u\mid Z)$, which gives the identity in~\eqref{eq:step}. For the bound, take any unit vector $e$. By Cauchy--Schwarz, $e^\top\operatorname{Cov}(A,\bar u\mid Z)=\operatorname{Cov}(A,e^\top\bar u\mid Z)\le\sigma_A\sqrt{\operatorname{Var}(e^\top\bar u\mid Z)}\le\sigma_A\sigma_{\bar u}$. Taking the supremum over $e$ bounds the norm of the covariance vector. The triangle inequality and the positivity of the denominator (A2) complete the proof. \hfill$\square$

\subsection{Proof of Proposition~\ref{prop:nft} and the relation to prediction-space NFT}
\label{app:nft-proof}\label{sec:nft-link}\label{sec:nft}

MeanFlowNFT forms the implicit positive and negative velocities $V^\pm=V_{\sold}\pm\beta(V_\theta-V_{\sold})$ and minimizes $r\|V^+-v_t\|_2^2+(1-r)\|V^--v_t\|_2^2$. With the shared correction, $V_\theta-V_{\sold}=u_\theta-u_{\sold}=:\Delta$, a function of $Z$, and $V_{\sold}-v_t=u_{\sold}-\bar u=:e$. Hence
\begin{equation}
r\|\beta\Delta+e\|_2^2+(1-r)\|{-\beta\Delta}+e\|_2^2=\beta^2\|\Delta\|_2^2+2\beta A\langle\Delta,e\rangle+\|e\|_2^2,\qquad A=2r-1 .
\label{eq:nft-expand}
\end{equation}
Conditioning on $Z$ and minimizing over $\Delta$ gives $\Delta=-\mathbb E[Ae\mid Z]/\beta=\bigl(\mathbb E[A\bar u\mid Z]-\mathbb E[A\mid Z]u_{\sold}\bigr)/\beta$. By (A3), this equals $\operatorname{Cov}(A,\bar u\mid Z)/\beta$, which proves~\eqref{eq:nft-min}. \hfill$\square$

\textbf{The positive-policy example.} Let $A=2r-1$ and $\gamma=1$. Then $\gamma+A=2r$ and $q\propto r\,\piold=\pi^+$. Write $\rho=\mathbb E[r\mid Z]$ and $u_{\pi^+}=\mathbb E_{\pi^+}[\bar u\mid Z]$. By (A3), $\operatorname{Cov}(A,\bar u\mid Z)=2\bigl(\mathbb E[r\bar u\mid Z]-\rho\,u_{\sold}\bigr)=2\rho\,(u_{\pi^+}-u_{\sold})$. The NFT minimizer is therefore $u_{\sold}+(2\rho/\beta)(u_{\pi^+}-u_{\sold})$. It equals $u_{\pi^+}$ only where $2\rho(Z)=\beta$. By Theorem~\ref{thm:tilt}, the MFA minimizer is $u_{\pi^+}$ at every $Z$.

\textbf{Prediction-space form.} Multiplying by $t^2$ and dropping $\theta$-independent terms, \eqref{eq:nft-expand} is equivalent to $\beta A\|F_\theta-x_0\|_2^2+\beta(\beta-A)\|F_\theta-F_{\sold}\|_2^2$. After dividing by $\beta$, NFT is the MFA quadratic with the sample-dependent anchor weight $\gamma(A)=\beta-A$ and constant total curvature $\beta$. At $\beta=1$, it matches the variant $\gamma(A)=1-A$ with $\lambda=0$. Separate adaptive denominators for the positive and negative branches break this equivalence. The same expansion, with the data target $y_\kappa$, anchor $b$, and reference weight $k$, gives the NFT optimum in~\eqref{eq:dna-opt}.

\subsection{Flow-map consistency and proof of Theorem~\ref{thm:transfer}}
\label{app:consistency}\label{app:transfer}

For a velocity field $v$, define $\mathcal I_v[u]=u+(t-s)\bigl(\partial_tu+(\partial_xu)v\bigr)$.

\begin{lemma}[Flow-map consistency]
\label{lem:consistency}
Suppose $v$ admits a unique flow on $[0,1]$, and $u$ is continuously differentiable for $s<t$ and bounded as $t\downarrow s$. If $\mathcal I_v[u]=v$ along every trajectory, then $u(x_t,s,t)=(x_t-x_s)/(t-s)$, where $x_\tau$ is the trajectory of $v$ through $x_t$.
\end{lemma}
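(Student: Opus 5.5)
Fix $s$ and a trajectory $x_\tau$ of $v$, and reduce the PDE to a scalar linear ODE in $t$ along that trajectory. Define $g(t)=(t-s)\,u(x_t,s,t)$ for $t\in(s,1]$, where $x_t$ is the unique trajectory through the given point. By the chain rule, $\tfrac{\mathrm d}{\mathrm dt}u(x_t,s,t)=\partial_tu+(\partial_xu)\dot x_t=\partial_tu+(\partial_xu)\,v(x_t,t)$, since $\dot x_t=v(x_t,t)$. Here continuous differentiability of $u$ for $s<t$ justifies the chain rule. Hence
\begin{equation*}
g'(t)=u+(t-s)\bigl(\partial_tu+(\partial_xu)v\bigr)=\mathcal I_v[u](x_t,s,t)=v(x_t,t)=\dot x_t .
\end{equation*}
So $g(t)-x_t$ has zero derivative on $(s,1]$ and equals a constant $K$.

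Next, pin down $K$ by sending $t\downarrow s$. Because $u$ is bounded as $t\downarrow s$, we get $g(t)=(t-s)u\to0$. Since the flow is unique and hence continuous, $x_t\to x_s$. Therefore $K=\lim_{t\downarrow s}(g(t)-x_t)=-x_s$, which gives $(t-s)u(x_t,s,t)=x_t-x_s$ for all $t>s$. Dividing by $t-s$ yields the claim. With the paper's sign convention, the sampler $x_s=x_t-(t-s)u$ is therefore the exact flow map, and composing such exact maps over any grid reproduces the flow.

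The main obstacle is the boundary limit. The argument needs $g(t)\to0$, which is exactly where boundedness is used. Without it, a homogeneous solution $K/(t-s)$ added to $u$ would also satisfy $\mathcal I_v[u]=v$, because $\mathcal I_v$ annihilates $u=K/(t-s)$ when $K$ is constant along the trajectory. I would remark on this to show the hypothesis is necessary. A second subtlety is that the condition is imposed ``along every trajectory,'' which must be read as holding at every point $(x_t,t)$ reached by the flow for each fixed $s$. Existence and uniqueness of the flow guarantee that every starting point lies on exactly one trajectory, so the identity determines $u$ everywhere. For Theorem~\ref{thm:transfer}, I would then apply the lemma with $v=v_q$ and $u=u^\ast$, the fixed point in~\eqref{eq:fixed-point}, which is exactly $\mathcal I_{v_q}[u^\ast]=v_q$.
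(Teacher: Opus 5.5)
Your proof is correct and matches the paper's argument: the paper sets $U(\tau)=(\tau-s)\,u(x_\tau,s,\tau)$, gets $U'(\tau)=v(x_\tau,\tau)$ from the chain rule and the hypothesis, uses boundedness to send $U(\tau)\to0$ as $\tau\downarrow s$, and integrates from $s$ to $t$, which is equivalent to your observation that $g(t)-x_t$ is constant. One small correction: $x_t\to x_s$ holds because the trajectory solves the ODE and is therefore continuous in time, not because the flow is unique.
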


\textbf{Proof of the lemma.} Fix $s$ and let $U(\tau)=(\tau-s)\,u(x_\tau,s,\tau)$ for $\tau>s$. By the chain rule and the hypothesis, $\mathrm dU/\mathrm d\tau=\mathcal I_v[u](x_\tau,s,\tau)=v(x_\tau,\tau)$. Boundedness of $u$ gives $U(\tau)\to0$ as $\tau\downarrow s$. Integrating from $s$ to $t$ yields $(t-s)\,u(x_t,s,t)=\int_s^tv(x_\tau,\tau)\,\mathrm d\tau=x_t-x_s$. \hfill$\square$

\textbf{Proof of Theorem~\ref{thm:transfer}.} Condition~\eqref{eq:fixed-point} says $\mathcal I_{v_q}[u^\ast]=v_q$. By Lemma~\ref{lem:consistency}, $x_t-(t-s)u^\ast(x_t,s,t)=x_s$ on every trajectory of $v_q$, so each step of~\eqref{eq:nstep} with $u^\ast$ is the exact flow map of $v_q$ over its interval. A composition over any grid is therefore the exact flow from $t=1$ to $t=0$. The rectified-flow velocity $v_q$ transports $\mathcal N(0,I)$ at $t=1$ to $q$ at $t=0$, so the sampler draws from $q$ for every $N$. For the reward,
\begin{equation}
\mathbb E_q[R\mid c]-\mathbb E_{\piold}[R\mid c]
=\frac{\mathbb E_{\piold}[(\gamma+A)R\mid c]}{\gamma+\bar A_c}-\mathbb E_{\piold}[R\mid c]
=\frac{\operatorname{Cov}_{\piold}(A,R\mid c)}{\gamma+\bar A_c}.
\end{equation}
If $A=\phi(R)$ with $\phi$ nondecreasing, Chebyshev's association inequality gives $\operatorname{Cov}(\phi(R),R)\ge0$. The clipped, prompt-centered advantage~\eqref{eq:adv} is of this form for a scalar reward when the prompt mean and the global scale are treated as fixed. \hfill$\square$

\subsection{Variance reduction of the shared anchor}
\label{app:vr}

\begin{proposition}[Shared-anchor variance reduction]
\label{prop:vr}
Freeze the rollout model and the shared correction, and let $J_\theta=-t\,\partial_\theta u_\theta(Z)$. If $\mathbb E[F_{\sold}-x_0\mid Z]=0$, then
\begin{equation}
\mathbb E\|F_\theta-x_0\|_2^2=\mathbb E\|F_\theta-F_{\sold}\|_2^2+C
\label{eq:vr}
\end{equation}
for a $\theta$-independent constant $C$. Moreover, the anchor gradient $2J_\theta^\top(F_\theta-F_{\sold})$ is the conditional expectation, given $Z$, of the regression gradient $2J_\theta^\top(F_\theta-x_0)$, so its covariance is no larger in the positive-semidefinite order.
\end{proposition}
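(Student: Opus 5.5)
The argument is the same orthogonal decomposition used in Step~1 of the proof of Theorem~\ref{thm:tilt}, now written in prediction space. Freezing the rollout model and the shared correction makes $F_{\sold}$ and the correction term fixed. By~\eqref{eq:shared-cd}, $F_\theta-F_{\sold}=-t(u_\theta-u_{\sold})$, so this difference is a function of $Z$ alone.

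First I will write
\begin{equation}
F_\theta-x_0=(F_\theta-F_{\sold})+(F_{\sold}-x_0)
\end{equation}
and expand the square. The cross term is handled by conditioning on $Z$:
\begin{equation}
\mathbb E\bigl[\langle F_\theta-F_{\sold},F_{\sold}-x_0\rangle\bigr]=\mathbb E\bigl[\langle F_\theta-F_{\sold},\,\mathbb E[F_{\sold}-x_0\mid Z]\rangle\bigr],
\end{equation}
which vanishes by hypothesis. This gives~\eqref{eq:vr} with $C=\mathbb E\|F_{\sold}-x_0\|_2^2$, which does not depend on $\theta$ because the rollout model and the correction are frozen. The hypothesis itself is just (A3) rewritten through $F_{\sold}-x_0=-t(u_{\sold}-\bar u)$; I only need it as stated.

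For the gradient claim, I differentiate $\|F_\theta-x_0\|^2$ in $\theta$. Since $x_0$ and the detached correction carry no $\theta$-dependence, $\partial_\theta F_\theta=-t\,\partial_\theta u_\theta(Z)=J_\theta$, and the per-sample regression gradient is $g=2J_\theta^\top(F_\theta-x_0)$. The matrix $J_\theta$ is $Z$-measurable, so
\begin{equation}
\mathbb E[g\mid Z]=2J_\theta^\top\bigl(F_\theta-\mathbb E[x_0\mid Z]\bigr).
\end{equation}
The hypothesis gives $\mathbb E[x_0\mid Z]=F_{\sold}$, so this equals the anchor gradient $\bar g=2J_\theta^\top(F_\theta-F_{\sold})$.

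The covariance statement then follows from the law of total covariance:
\begin{equation}
\operatorname{Cov}(g)=\operatorname{Cov}(\mathbb E[g\mid Z])+\mathbb E[\operatorname{Cov}(g\mid Z)]\succeq\operatorname{Cov}(\bar g),
\end{equation}
since the second term is positive semidefinite. This is the Rao--Blackwell step.

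There is no real obstacle. The only points needing care are bookkeeping: checking that the stop-gradient keeps $J_\theta$ a function of $Z$ only, so it factors out of the conditional expectation, and checking that the finite second moments assumed in the Setting paragraph justify the expansion and the use of total covariance.
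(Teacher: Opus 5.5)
Your route is the paper's: the same decomposition $F_\theta-x_0=(F_\theta-F_{\sold})+(F_{\sold}-x_0)$, the cross term removed by conditioning on $Z$, the constant $C=\mathbb E\|F_{\sold}-x_0\|_2^2$, and the law of total covariance for the gradients. The first part is correct as written.

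The gradient part, however, has a false step on positive-length intervals. You write $\mathbb E[g\mid Z]=2J_\theta^\top(F_\theta-\mathbb E[x_0\mid Z])$, which pulls $F_\theta$ out of the conditional expectation as if it were $Z$-measurable. You then say the hypothesis gives $\mathbb E[x_0\mid Z]=F_{\sold}$, which treats $F_{\sold}$ as a function of $Z$. Neither prediction is a function of $Z$. Both contain the shared term $-t(t-s)\,\sg[d]$, and $d$ depends on the sampled direction $v_t=\epsilon-x_0$, which remains random given $Z=(x_t,s,t,c)$. Under (A1), $d=\partial_tu_{\sold}(Z)+\partial_xu_{\sold}(Z)\,v_t$, and the finite-difference estimate in~\eqref{eq:cd} evaluates $u_{\sold}$ at $x_t\pm\delta v_t$. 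Freezing the correction removes its dependence on $\theta$, not its dependence on $v_t$. The hypothesis therefore yields only $\mathbb E[x_0\mid Z]=\mathbb E[F_{\sold}\mid Z]$. Your displayed identity equates a $Z$-measurable quantity with one that is not.

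The two mistakes cancel, so your conclusion is right, and the repair uses the decomposition you already have. Since $J_\theta$ and $F_\theta-F_{\sold}=-t(u_\theta-u_{\sold})$ are functions of $Z$, $\mathbb E[g\mid Z]=2J_\theta^\top\bigl((F_\theta-F_{\sold})+\mathbb E[F_{\sold}-x_0\mid Z]\bigr)=2J_\theta^\top(F_\theta-F_{\sold})$. This is exactly how the paper argues. It is also the reason the correction is shared: the $v_t$-dependent parts cancel only in the pairwise difference. So the $Z$-measurability you need to check, alongside that of $J_\theta$, is that of the difference $F_\theta-F_{\sold}$, not of $F_\theta$ or $F_{\sold}$ individually.
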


\textbf{Proof.} The condition is (A3) written in prediction space, since $F_{\sold}-x_0=-t(u_{\sold}-\bar u)$. Let $e=F_{\sold}-x_0$. By~\eqref{eq:shared-cd}, $F_\theta-F_{\sold}$ is a function of $Z$, so $\mathbb E\langle F_\theta-F_{\sold},e\rangle=\mathbb E\langle F_\theta-F_{\sold},\mathbb E[e\mid Z]\rangle=0$. Expanding $\|F_\theta-F_{\sold}+e\|_2^2$ gives~\eqref{eq:vr} with $C=\mathbb E\|e\|_2^2$. For the gradients, let $G=2J_\theta^\top(F_\theta-x_0)$ and $\widetilde G=2J_\theta^\top(F_\theta-F_{\sold})$. Both $J_\theta$ and $F_\theta-F_{\sold}$ are functions of $Z$, so $\mathbb E[G\mid Z]=\widetilde G$. The law of total covariance gives $\operatorname{Cov}(G)=\operatorname{Cov}(\widetilde G)+\mathbb E[\operatorname{Cov}(G\mid Z)]\succeq\operatorname{Cov}(\widetilde G)$. \hfill$\square$

This statement concerns the unscaled rollout anchor only. It does not imply lower variance for the full signed-advantage update with adaptive scaling.

\subsection{Semigroup residual}
\label{app:semigroup}

For $\Phi^u_{s,t}(x)=x-(t-s)u(x,s,t)$, an exact flow map satisfies
\begin{equation}
\Phi^u_{r,t}(x)=\Phi^u_{r,s}\bigl(\Phi^u_{s,t}(x)\bigr),\qquad r<s<t.
\label{eq:semigroup}
\end{equation}
By Lemma~\ref{lem:consistency}, a map that satisfies the fixed point~\eqref{eq:fixed-point} also satisfies~\eqref{eq:semigroup}. The residual of~\eqref{eq:semigroup} therefore measures how far a trained model is from that fixed point. Figure~\ref{fig:any-step-analysis-placeholder} reports the relative latent RMS distance between the direct map $\Phi_{1\to0}$ and uniform $N$-step compositions, which is zero by definition at $N=1$. A small residual is not by itself evidence of reward improvement.

\section{Implementation notes}
\label{app:impl}
\label{sec:impl}

\begin{algorithm}[h]
\caption{One MeanFlowAdvantage update (image model)}
\label{alg:mfa}\label{sec:algo}
\begin{algorithmic}[1]
\Require learner $\theta$, frozen rollout $\theta_{\sold}$ and reference $\theta_{\sref}$; rewards; $\gamma,\lambda$; group size $K$; steps $N$
\State Sample prompts; generate $K$ images per prompt with~\eqref{eq:nstep}
\State Score the images; compute clipped signed advantages with~\eqref{eq:adv}
\For{each detached $(x_0,A,c)$}
 \State Draw $(s,t)$ by~\eqref{eq:three-mode} and $\epsilon$; set $x_t=(1-t)x_0+t\epsilon$
 \State If $s=t$, set $d=0$; otherwise compute $d$ once from $u_{\sold}$ by~\eqref{eq:cd}
 \State Form $V_k=u_k+(t-s)d$ and $F_k=x_t-tV_k$ for $k\in\{\theta,\sold,\sref\}$
 \State Compute $\ell_\theta/w$ with~\eqref{eq:mfa-loss} and~\eqref{eq:scale}
\EndFor
\State Update $\theta$ on the average loss
\State Refresh the rollout model: $\theta_{\sold}\leftarrow\rho\,\theta_{\sold}+(1-\rho)\,\theta$
\end{algorithmic}
\end{algorithm}

\textbf{Scheduler units and finite differences.}
With $T=1000$, $t_{\mathrm{raw}}=Tt$ and $s_{\mathrm{raw}}=Ts$,
the normalized and raw-time expressions agree:
\begin{equation}
 x_t=(1-t)x_0+t\epsilon,\qquad F=x_t-tV,\qquad
 (t_{\mathrm{raw}}-s_{\mathrm{raw}})d_{\mathrm{raw}}=(t-s)D_tu.
 \label{eq:impl-scale}
\end{equation}
The reported finite-difference increment of $5$ is in raw scheduler units, corresponding to $\delta=0.005$. Spatial displacements must therefore use $\delta v_t$, not $5v_t$.
The stencil~\eqref{eq:cd} assumes $s\le t-\delta$ and $t+\delta\le1$. Near a boundary, the stencil must be shortened or replaced by a valid one-sided difference; independently clamping its endpoints while retaining $2\delta$ in the denominator is not the same estimator. On $s=t$, the correction is zero and the stencil is skipped.

\textbf{Advantage and loss scaling.}
The final multi-reward advantage is clipped after aggregation. For constant $\gamma$, a lower bound
$A\ge-\gamma-\lambda+\varepsilon_A$ enforces positive curvature; the reported $\gamma=1.1$, $\lambda=10^{-3}$ and $A\in[-1,1]$ already satisfy it. For $\gamma(A)=1-A$, the curvature is $1+\lambda$ and no additional floor is necessary. The shared denominator in~\eqref{eq:scale} is evaluated in float64. It preserves the relative weights of the three terms within a sample, but changes the relative weights of different samples. It neither equalizes the three residual magnitudes nor preserves the unweighted population regression problem in general.

\textbf{Ablations to isolate the construction.}
The direct-output control uses $F^{\mathrm{direct}}=x_t-tu_\theta$, retaining the same clean target, regularizers, and time distribution. The diffusion-only control sets $\rho_{\mathrm{diff}}=1$. Prediction-space and velocity-space losses must be compared with their effective time weights reported: without other changes they differ by $t^2$. The completed matched endpoints for these controls are reported in Appendix Table~\ref{tab:mfa-ablation}.

\section{Experimental details}
\label{app:expts}

Tables~\ref{tab:hp-sd35}, \ref{tab:hp-dna-onpolicy} and~\ref{tab:hp-dna} collect the reported configurations, and Table~\ref{tab:dna-onpolicy-seeds} the per-seed DNA RL results. The image baseline follows the published Stage-3 protocol. Both methods use $K=24$ samples per prompt; the corresponding EMA schedule, evaluation resolution, prompt lists, and checkpoint identifiers are fixed according to the reported Stage-3 protocol.

\begin{table}[H]
\caption{SD3.5 Medium RL (Stage~3 protocol of MeanFlowNFT).
Sampling and reward choices follow the baseline; the objectives and
regularization coefficients differ.}
\label{tab:hp-sd35}
\begin{center}
{\small
\setlength{\tabcolsep}{3pt}
\begin{tabular}{lcc}
\toprule
 & MeanFlowNFT & MeanFlowAdvantage \\
\midrule
Init & AnyFlow + fresh LoRA & AnyFlow + fresh LoRA \\
LoRA rank / $\alpha$ & $32$ / $64$ & $32$ / $64$ \\
Image size (train) & $512$ & $512$ \\
Rollout NFE / CFG & $4$ / none & $4$ / none \\
Prompt groups $L$ per update & $48$ & $48$ \\
Optimizer & AdamW & AdamW \\
Learning rate & $3\times 10^{-6}$ & $3\times 10^{-6}$ \\
$(s,t)$ mix $(\rho_{\mathrm{diff}},\rho_{\mathrm{cons}})$ & $(0.5,0.25)$ & $(0.5,0.25)$ \\
Central-difference increment (raw units) & $5$ & $5$ \\
Shared $D_t$ & yes & yes \\
Train rewards & Pick / HPS / CLIP & Pick / HPS / CLIP \\
$\beta$ (NFT) & $0.1$ & --- \\
$\gamma$ / $\lambda$ (LS) & --- & $1.1$ / $10^{-3}$ \\
Reference coefficient (objective-specific) & $10^{-4}$ & $10^{-3}$ \\
Advantage clip & $[-1,1]$ & $[-1,1]$ \\
\bottomrule
\end{tabular}}
\end{center}
\end{table}

\begin{table}[H]
\caption{FANTOM5 on-policy DNA RL (no teacher). Both methods share every
row above the line; only the loss coefficients differ.}
\label{tab:hp-dna-onpolicy}
\begin{center}
{\small
\begin{tabular}{lcc}
\toprule
 & MeanFlowNFT & MeanFlowAdvantage \\
\midrule
Init & RMF \texttt{ckpts/0120} & RMF \texttt{ckpts/0120} \\
Signals per update & $8$ & $8$ \\
Group size $K$ & $8$ & $8$ \\
Rollout NFE & $1$ & $1$ \\
$(s,t)$ mix (boundary, $s{=}1$, general) & $(0.5,0.25,0.25)$ & $(0.5,0.25,0.25)$ \\
$t$ sampler & logit-normal$(-0.4,1)$ & logit-normal$(-0.4,1)$ \\
Noise draws per sample & $2$ & $2$ \\
Shared correction (JVP) & yes & yes \\
Tangent clip & $100$ & $100$ \\
Adaptive scale exponent $p$ & $0.5$ & $0.5$ \\
Learning rate & $10^{-4}$ & $10^{-4}$ \\
Grad clip & $0.5$ & $0.5$ \\
Rollout EMA $\rho$ & $0.9$ & $0.9$ \\
Advantage clip & $[-1,1]$ & $[-1,1]$ \\
Train steps (eval interval) & $600$ ($25$) & $600$ ($25$) \\
Seeds & $123,7,42$ & $123,7,42$ \\
\midrule
$\beta$ / reference weight & $1.0$ / $0.01$ & --- \\
$\gamma$ / $\lambda$ & --- & $1.1$ / $0.01$ \\
\bottomrule
\end{tabular}}
\end{center}
\end{table}

\begin{table}[H]
\caption{Per-seed on-policy DNA results behind Table~\ref{tab:dna}.
Checkpoints are selected on valid ($n{=}256$) and reported on test
($n{=}512$), both with eval seed $7$. $\Delta$ is the paired
NFT$-$MFA test Sei MSE with a $95\%$ sequence bootstrap interval;
positive favors MFA.}
\label{tab:dna-onpolicy-seeds}
\begin{center}
{\small
\begin{tabular}{lcccccc}
\toprule
 & \multicolumn{2}{c}{MeanFlowNFT} & \multicolumn{2}{c}{MeanFlowAdvantage} & \\
\cmidrule(lr){2-3}\cmidrule(lr){4-5}
Seed & Sei MSE$\downarrow$ & $6$-mer$\uparrow$ & Sei MSE$\downarrow$ & $6$-mer$\uparrow$ & $\Delta$ [$95\%$ CI] \\
\midrule
$123$ & $0.0445$ & $0.927$ & $\mathbf{0.0437}$ & $\mathbf{0.952}$ & $0.0008$ $[-0.0031,0.0046]$ \\
$7$ & $\mathbf{0.0526}$ & $0.940$ & $0.0537$ & $\mathbf{0.954}$ & $-0.0012$ $[-0.0051,0.0026]$ \\
$42$ & $\mathbf{0.0441}$ & $0.944$ & $0.0472$ & $\mathbf{0.953}$ & $-0.0031$ $[-0.0069,0.0005]$ \\
\midrule
Mean & $\mathbf{0.0471}$ & $0.937$ & $0.0482$ & $\mathbf{0.953}$ & --- \\
Std. & $0.0048$ & $0.009$ & $0.0051$ & $\mathbf{0.001}$ & --- \\
\bottomrule
\end{tabular}}
\end{center}
\vspace{2pt}
{\footnotesize All three Sei MSE intervals contain zero, so the two
objectives are statistically indistinguishable on the training reward,
as Proposition~\ref{prop:nft} predicts. The $6$-mer ordering is
consistent across seeds but is a three-seed trend, not a tested claim.}
\end{table}

\begin{table}[H]
\caption{FANTOM5 reward-filtered promoter distillation.
Shared rows are identical for both losses; method columns list only
the knobs that differ.}
\label{tab:hp-dna}
\begin{center}
{\small
\begin{tabular}{lcc}
\toprule
 & MeanFlowNFT & MeanFlowAdvantage \\
\midrule
Init & RMF \texttt{ckpts/0120} & RMF \texttt{ckpts/0120} \\
Sequence length & $1024$ bp & $1024$ bp \\
Train / valid / test chr. & not $8$--$10$ / $10$ / $8$--$9$ & not $8$--$10$ / $10$ / $8$--$9$ \\
Batch size & $8$ & $8$ \\
Student NFE & $1$ (unguided) & $1$ (unguided) \\
Teacher NFE / guidance & $10$ / $10$ & $10$ / $10$ \\
Teacher setting & $x_1$ look-ahead & $x_1$ look-ahead \\
Accept if $\Delta r_{\mathrm{Sei}}$ & $\ge 0.002$ & $\ge 0.002$ \\
Query time $t$ & $0$ & $0$ \\
Advantage & $\operatorname{clip}(10\Delta r,0,1)$ & $\operatorname{clip}(10\Delta r,0,1)$ \\
Rollout EMA coefficient & $1$ (frozen) & $1$ (frozen) \\
Seed / eval seed & $123$ / $7$ & $123$ / $7$ \\
Learning rate & $2.5\times 10^{-5}$ & $2.5\times 10^{-5}$ \\
Grad clip & $0.5$ & $0.5$ \\
$\beta$ & $0.8$ & --- \\
$\gamma$ / $\lambda$ & --- & $0.01$ / $0.01$ \\
Target extrapolation & $1.5$ & $\kappa=1.5$ \\
Max steps (selection) & $100$ ($90$) & $100$ ($90$) \\
\bottomrule
\end{tabular}}
\end{center}
\end{table}

Both DNA losses act on continuous RMF $x_1$ endpoint coordinates, using the convention of noise at $t=0$ and data at $t=1$. This differs from the image convention in~\eqref{eq:xt}. The learner calls the RMF endpoint predictor at $(t,s)=(0,1)$; its data target is the guided endpoint, optionally extrapolated from the frozen rollout endpoint as specified in~\eqref{eq:dna-loss}. Thus this adaptation does not use $F=x_t-tV$ at image time $t=0$, which would have zero residual.

Figure~\ref{fig:dna}b--c evaluates both methods every $10$ steps through step $100$. Under the EMA convention of Algorithm~\ref{alg:mfa}, $\rho=1$ freezes the rollout parameters, while the student remains trainable. Evaluation samples from the updated learner; the frozen rollout model is used only to construct training pairs and anchor predictions.

\section{Additional text-to-image results}
\label{app:image-extra}

\begin{figure}[htbp]
\centering
\includegraphics[width=\linewidth]{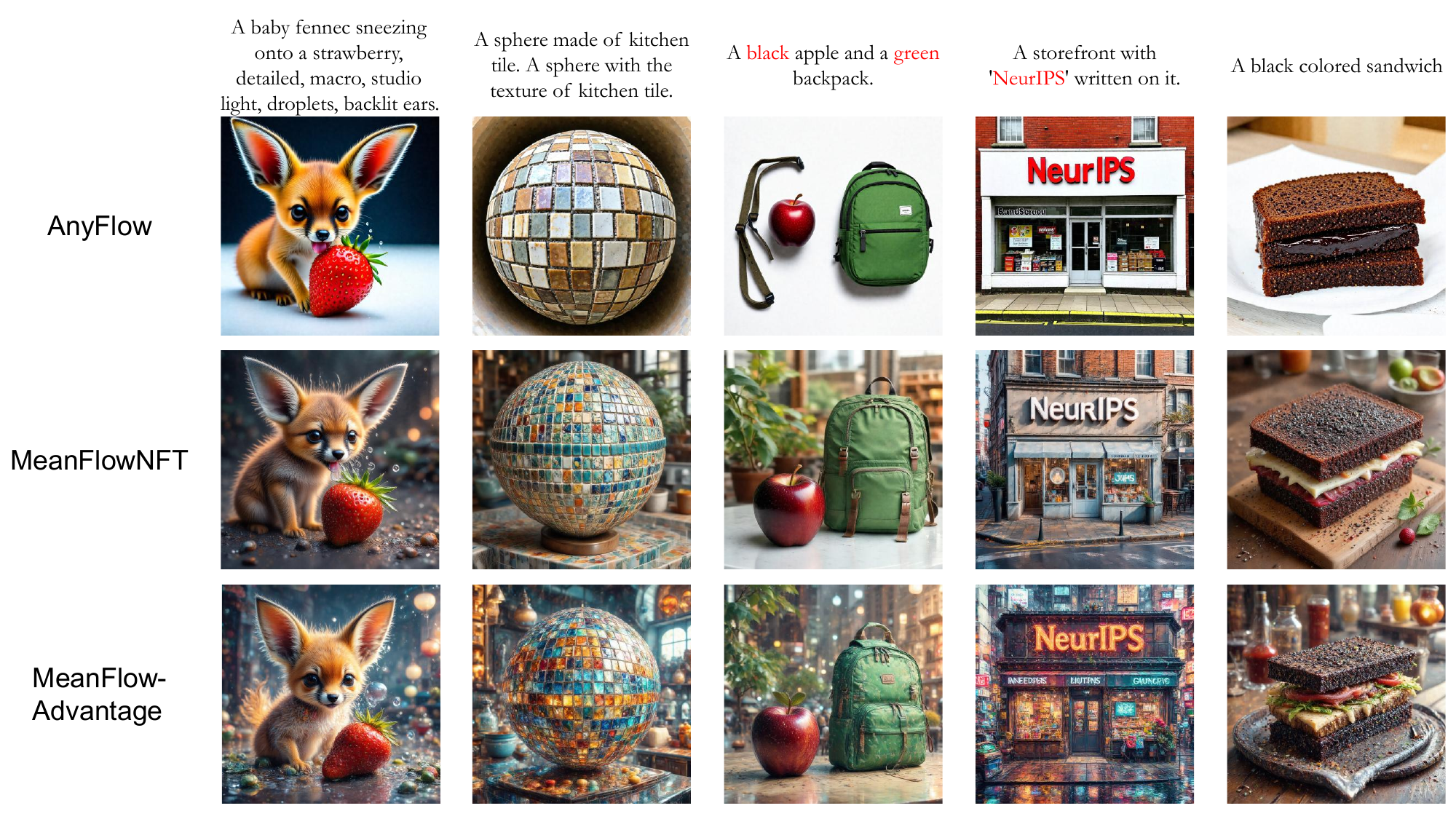}
\caption{More Qualitative Comparison Cases. The prompts are taken from GenEval, OCR and DrawBench respectively,where we compare the corresponding MeanFlowNFT model with our model.}
\label{fig:QualCom-si}
\end{figure}

\subsection{Matched image-model ablations}
\label{app:image-ablation-table}
The complete matched ablation endpoints used by the discussion in Section~\ref{sec:ablation} are reported below. These experiments provide robust directional insights that fully reinforce our primary findings.

For completed runs, training statistics are averages over the final 50 optimization steps and the held-out score is the final fixed-interval online evaluation (step 1950).
The two unstable positive-only controls were stopped at step 300 after the predefined collapse screen; their final online evaluation is at step 250.

\begin{table}[t]
\caption{Matched image-model ablations. ``Reward'' is the mean training reward
over the final 50 recorded steps of each run; ``Held-out Sum'' is the fixed
DrawBench online aggregate used for the ablation study. The early-stopped
positive-only runs are not equal-budget endpoints.}
\label{tab:mfa-ablation}
\centering
\resizebox{\linewidth}{!}{%
\begin{tabular}{lrrrrr}
\toprule
Variant & Train step & Reward$\uparrow$ & Held-out Sum$\uparrow$ & Grad. norm$\downarrow$ & s/step$\downarrow$ \\
\midrule
Shared induced $V$ (control) & 2000 & 1.5493 & 8.9841 & 0.97 & 73.3 \\
Direct $u$ (no induced correction) & 2000 & 1.5071 & 8.8915 & 2.10 & 64.8 \\
$s=t$ diffusion-only & 2000 & 0.8734 & 2.9590 & 445.79 & 65.4 \\
Separate derivative & 2000 & 1.5486 & 9.0382 & 1.00 & 89.9 \\
No adaptive $w$ & 2000 & 1.4959 & 8.6952 & 11.97 & 87.4 \\
$\gamma=5$ & 2000 & 1.5424 & 8.9936 & 0.84 & 72.9 \\
Positive $A$ only & 300$^\dagger$ & 0.7956 & 2.3275 & 1.30 & 73.4 \\
$A\equiv1$ & 300$^\dagger$ & 0.8260 & 3.7154 & 24.17 & 73.6 \\
$\gamma(A)=1-A$ & 2000 & 1.5482 & 9.0378 & 0.93 & 73.1 \\
\bottomrule
\end{tabular}}
\vspace{2pt}

{\footnotesize $^\dagger$Early-stopped after collapse screening; held-out evaluation is from step 250. The ablation aggregate uses the online normalized scoring convention and is not on the same numerical scale as the raw PickScore column in Table~\ref{tab:baselines}.}
\end{table}

Figure~\ref{fig:ablation-diagnostics-placeholder} reports the optimization diagnostics for these matched runs. The secondary comparisons below complement the three core findings in Section~\ref{sec:ablation}.

\begin{figure}[t]
\centering
% % Replace this box with the finalized optimization diagnostic figure, e.g.:
\includegraphics[width=\linewidth]{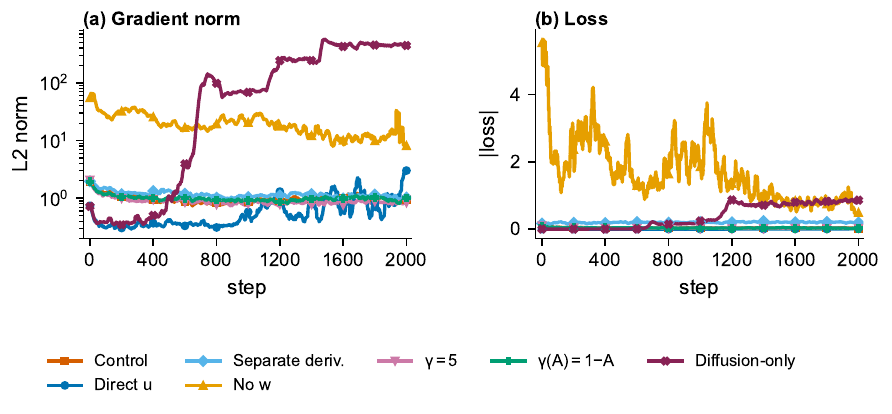}
% \fbox{\parbox[c][1.70in][c]{0.94\linewidth}{\centering
% \textbf{Ablation figure placeholder: optimization diagnostics}\\[3pt]
% Data/old/reference losses, gradient norm, advantage magnitude/variance,\\
% positive-negative fractions, and $\min(A+\gamma+\lambda)$.}}
\caption{Optimization stability across MeanFlowAdvantage ablations, highlighting gradient norms, loss dynamics, and collapse behavior.}
\label{fig:ablation-diagnostics-placeholder}
\end{figure}

\textbf{Sharing the derivative is the efficient default.}
Computing a separate derivative correction for learner, rollout, and reference
models reaches essentially the same reward regime as the shared correction,
but increases wall-clock time per step from 73.3 to 89.9 seconds. Its slightly
higher single-seed held-out aggregate is too small to establish a quality
advantage. This matches the algebra in \eqref{eq:shared-cd}: sharing the
correction makes the anchor residuals exact differences of the deployed
average-velocity networks while avoiding two additional derivative estimates.

\textbf{The anchor coefficient is not the main source of the gain.}
Increasing the fixed anchor coefficient from $\gamma=1.1$ to $\gamma=5$ leaves
the final reward and held-out aggregate nearly unchanged, while the dynamic
$\gamma(A)=1-A$ variant is also stable and reaches a comparable endpoint.
These single-seed differences are too small to rank the stable variants, but
they indicate that the principal effect is not a narrowly tuned value of
$\gamma$. We therefore retain $\gamma=1.1$ as the conservative default and
view $\gamma(A)=1-A$ as a useful follow-up rather than a claimed improvement.

\textbf{Positive local curvature is necessary but not sufficient.}
All stable configurations maintain $A+\gamma+\lambda>0$, as required by the
pointwise quadratic analysis in Section~\ref{sec:obj}. However, the
$s=t$ diffusion-only run also retains a positive minimum coefficient while its
data, reference, and gradient terms diverge late in training. The ablation
therefore clarifies the scope of the theory: positive curvature prevents the
per-sample free-prediction quadratic from becoming concave, but it cannot by
itself guarantee neural-network optimization stability or cross-interval flow
consistency.

\section{DNA promoters: on-policy RL and reward-graded distillation}
\label{sec:expts-dna}

The DNA experiments ask two questions. Does MFA work as on-policy RL for a generator whose state space is a manifold rather than a latent grid? And must the data targets come from the model's own rollouts at all? Nothing in Section~\ref{sec:analysis} requires the latter: per input, the loss~\eqref{eq:mfa-u} needs only a target, a scalar weight, and an anchor.

\textbf{Shared setup.} We use a Riemannian MeanFlow (RMF) generator~\citep{stark2024dirichlet,woo2026rmf} pretrained locally on $1024$-bp FANTOM5 promoters and conditioned on the regulatory signal; the reward is the negative Sei profile MSE~\citep{chen2022sei}. RMF places noise at time $0$ and data at $1$, so intervals run from $t$ to $s\ge t$ and the average velocity lies in the tangent space of the sphere at $x_t$. We train on the training chromosomes, select checkpoints on $256$ validation sequences (chr10) by Sei MSE, and report $512$ test sequences (chr8--9), evaluated one-step with a fixed prior seed and hard one-hot decoding. The pretrained generator reaches $0.0714$ Sei MSE and $0.928$ $6$-mer correlation.

\textbf{On-policy RL.} MFA is used exactly as in Section~\ref{sec:method}, with no teacher. Per signal, the frozen rollout model draws $K=8$ one-step sequences, Sei scores them, and~\eqref{eq:adv} standardizes the rewards within the group, so a sequence below its group average gets $A<0$. Each sequence is re-noised and the loss~\eqref{eq:mfa-u} is applied in the tangent space at $x_t$, with $\bar u$ and the shared correction from the rollout network as in RMF pretraining; intervals follow~\eqref{eq:three-mode}. MeanFlowNFT sees the same rollouts, advantages, intervals, and correction, differing only in the loss. Tables~\ref{tab:hp-dna-onpolicy} and~\ref{tab:dna-onpolicy-seeds} give the configuration and per-seed numbers.

Two things stand out (Table~\ref{tab:dna}, three seeds). First, teacher-free RL works on this manifold generator: MFA cuts the Sei MSE from $0.0714$ to $0.0482$ on average, and its best seed reaches $0.0437$, below the distilled models below. Second, MFA and NFT are statistically indistinguishable on the training reward: all three paired intervals contain zero. This is what Proposition~\ref{prop:nft} predicts, since a rollout model that tracks the learner leaves only a gain difference between the two objectives. They differ in the statistic that is \emph{not} optimized: MFA holds a $6$-mer correlation of $0.953\pm0.001$ against $0.937\pm0.009$, in all three seeds. We report this as a consistent trend over three seeds, not a tested claim.

\textbf{Reward-graded distillation.} We now replace the on-policy target by a teacher, the setting of scientific design, where a strong but slow sampler exists and the goal is a one-step generator that keeps its gains. From the same noise, a frozen one-step rollout gives an endpoint $b$ and a frozen ten-step Sei-guided teacher gives $g$. Their reward gap decides both whether a pair is used and how strongly:
\begin{equation}
\Delta r=r_{\mathrm{Sei}}(g)-r_{\mathrm{Sei}}(b),\qquad
\text{keep the pair if }\Delta r\ge\tau,\qquad
A=\operatorname{clip}(\eta\,\Delta r,0,1),
\label{eq:dna-two-stage}
\end{equation}
with $\tau=0.002$ and $\eta=10$. For each kept pair, the target is extrapolated past the teacher, and the rollout endpoint serves as both anchors:
\begin{equation}
y_\kappa=b+\kappa(g-b),\qquad
\ell_{\mathrm{DNA}}=A\|y_\theta-y_\kappa\|_2^2+(\gamma+\lambda)\|y_\theta-b\|_2^2,
\label{eq:dna-loss}
\end{equation}
with $\kappa=1.5$ and $\gamma=\lambda=0.01$. MeanFlowNFT receives the same pairs, advantages, extrapolation, learning rate, and validation protocol through its own loss, with $\beta=0.8$ and reference weight $k=0.01$.

\textbf{Why graded advantages separate the two losses.} As in~\eqref{eq:F-star}, each pair has a closed-form optimum. For MFA and for NFT (via the expansion in Appendix~\ref{app:nft-proof}),
\begin{equation}
y^\star_{\mathrm{MFA}}-b=\frac{\kappa A}{A+\gamma+\lambda}\,(g-b),\qquad
y^\star_{\mathrm{NFT}}-b=\frac{\kappa A}{\beta+k}\,(g-b).
\label{eq:dna-opt}
\end{equation}
MFA separates \emph{where} a pair points from \emph{how much} it counts: since $\gamma+\lambda=0.02$, every accepted pair targets nearly the full extrapolated endpoint and $A$ mainly sets the curvature $A+\gamma+\lambda$, that is, how hard the pair pulls. In NFT the advantage rescales the target itself, so pairs with a small but real improvement barely move while pairs with $A\approx1$ overshoot to $1.85(g-b)$. With $A\equiv1$ both reduce to a fixed extrapolation (Appendix~\ref{app:dna-extra}).

\begin{table}[htbp]
\caption{One-step promoter generation on the test chromosomes ($n=512$), from the same RMF initialization and the same protocol within each block. Sei MSE is the training reward; the $6$-mer correlation is never optimized. On-policy rows are mean $\pm$ s.d.\ over three training seeds, with the best seed in brackets; distillation rows are the single reported run. $\Delta$ is the paired NFT$-$MFA Sei MSE with a $95\%$ sequence bootstrap interval, so a positive $\Delta$ favors MFA.}
\label{tab:dna}
\centering
\small
\begin{tabular}{lccc}
\toprule
Method & Sei MSE$\downarrow$ & $6$-mer corr.$\uparrow$ & $\Delta$ [$95\%$ CI] \\
\midrule
Pretrained RMF & $0.0714$ & $0.928$ & --- \\
\midrule
\multicolumn{4}{l}{\emph{On-policy RL, no teacher (3 seeds)}}\\
MeanFlowNFT & $\mathbf{0.0471}\pm0.0048$ & $0.937\pm0.009$ & \multirow{2}{*}{$-0.0012$ $[-0.0051,0.0026]$} \\
MeanFlowAdvantage & $0.0482\pm0.0051$ $[0.0437]$ & $\mathbf{0.953}\pm0.001$ & \\
\midrule
\multicolumn{4}{l}{\emph{Reward-graded distillation from a $10$-step Sei-guided teacher}}\\
MeanFlowNFT & $0.0570$ & $\mathbf{0.955}$ & \multirow{2}{*}{$\mathbf{0.0103}$ $[0.0064,0.0143]$} \\
MeanFlowAdvantage & $\mathbf{0.0467}$ & $0.953$ & \\
\bottomrule
\end{tabular}
\end{table}

\textbf{Distillation results.} Both methods select step $90$ on validation. MFA reaches a Sei MSE of $0.0467$, $18\%$ lower than NFT on identical pairs and advantages, and here the gap is real: the paired difference is $0.0103$ with a $95\%$ interval of $[0.0064,0.0143]$ (Table~\ref{tab:dna}), and Figure~\ref{fig:dna} shows that it persists throughout training. Because that interval only resamples sequences, we repeated the whole comparison with four training seeds; Table~\ref{tab:dna-distill-seeds} shows the gap survives training randomness as well. The contrast with the on-policy block is the point of this section. With graded teacher improvements the two objectives use the reward differently, as~\eqref{eq:dna-opt} makes explicit, and with $A\equiv1$ they become indistinguishable again (Table~\ref{tab:dna-matched}). Distillation also loses the calibrated negative direction of on-policy RL: a teacher sample worse than the student does not say which direction is better. Unfiltered signed variants performed poorly (Appendix~\ref{app:dna-extra}), and we leave negative directions for distillation to future work.

\subsection{Additional promoter results}
\label{app:dna-fig-anchor}

\begin{figure}[htbp]
\centering
\includegraphics[width=\linewidth]{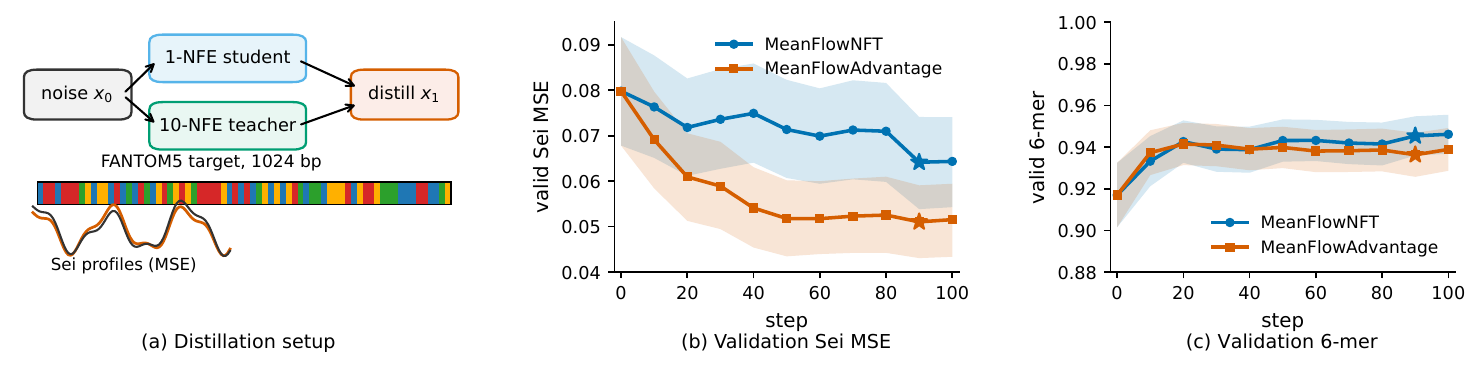}
\caption{Reward-graded MeanFlow distillation. (a) A frozen rollout and a Sei-guided teacher define the reward gap $\Delta r$ and the advantage $A$. (b)--(c) Validation curves ($n=256$); bands are sequence-bootstrap $95\%$ intervals, and stars mark the checkpoints selected by Sei MSE.}
\label{fig:dna}
\end{figure}

\label{app:dna-extra}

\textbf{Training-seed repeats of the distillation gap.}
The interval quoted in Table~\ref{tab:dna} resamples test sequences from
one training run, so it says nothing about how much the gap moves when
training is repeated. We therefore reran the whole comparison, both
losses and the full protocol, with four training seeds. Everything else
is held fixed: the same pretrained generator, the same teacher, the pair
filter $\tau=0.002$, $\kappa=1.5$, ten teacher steps, $\eta=10$, and the
same validation selection. Table~\ref{tab:dna-distill-seeds} lists the
result. MFA wins on every seed, and the spread across seeds is small
next to the gap itself: the paired difference is $0.0115$ with a
seed-level standard deviation of $0.0011$, and a bootstrap that
resamples seeds and sequences together gives $[0.0091,0.0138]$. The
reduction quoted in the main text is the smallest of the four
($18.1\%$); the four seeds average $19.7\%$. MFA is also the steadier of
the two, varying by $0.0005$ across seeds against $0.0010$ for NFT.
We did not sweep the remaining knobs of~\eqref{eq:dna-two-stage}, so the
filter threshold, the extrapolation $\kappa$, the teacher budget, and
the reward scale $\eta$ are fixed throughout at the values above; the
$A\equiv1$ rows of Table~\ref{tab:dna-matched} vary $\kappa$ and the
learning rate only.

\begin{table}[htbp]
\caption{Reward-graded distillation repeated over four training seeds.
Checkpoints are selected on valid ($n{=}256$) and reported on test
($n{=}512$, eval seed $7$). $\Delta$ is the paired NFT$-$MFA Sei MSE,
with a $95\%$ bootstrap over sequences; positive favors MFA. Seed $123$
is the run reported in Table~\ref{tab:dna}.}
\label{tab:dna-distill-seeds}
\begin{center}
{\small
\begin{tabular}{lcccccc}
\toprule
 & \multicolumn{2}{c}{MeanFlowNFT} & \multicolumn{2}{c}{MeanFlowAdvantage} & & \\
\cmidrule(lr){2-3}\cmidrule(lr){4-5}
Seed & Sei MSE$\downarrow$ & $6$-mer$\uparrow$ & Sei MSE$\downarrow$ & $6$-mer$\uparrow$
 & $\Delta$ [$95\%$ CI] & Red. \\
\midrule
$123$ & $0.0570$ & $0.955$ & $\mathbf{0.0467}$ & $0.953$ & $0.0103$ $[0.0064,0.0143]$ & $18.1\%$ \\
$42$ & $0.0585$ & $0.957$ & $\mathbf{0.0467}$ & $0.953$ & $0.0118$ $[0.0075,0.0162]$ & $20.2\%$ \\
$456$ & $0.0586$ & $0.958$ & $\mathbf{0.0477}$ & $0.947$ & $0.0110$ $[0.0068,0.0152]$ & $18.7\%$ \\
$789$ & $0.0594$ & $0.958$ & $\mathbf{0.0465}$ & $0.949$ & $0.0129$ $[0.0084,0.0173]$ & $21.7\%$ \\
\midrule
Mean & $0.0584$ & $0.957$ & $\mathbf{0.0469}$ & $0.951$ & $0.0115$ & $19.7\%$ \\
Std. & $0.0010$ & $0.001$ & $\mathbf{0.0005}$ & $0.003$ & $0.0011$ & $1.6\%$ \\
\bottomrule
\end{tabular}}
\end{center}
\vspace{2pt}
{\footnotesize Resampling seeds and sequences jointly ($4000$ draws)
gives a $95\%$ interval of $[0.0091,0.0138]$ for the mean paired
difference, so the gap does not depend on the seed that was reported.}
\end{table}

\textbf{Constant-advantage special case.}
Setting $A\equiv1$ discards the reward-gap magnitude after filtering. Because the rollout and reference anchors both equal $b$, the two DNA objectives then share an extrapolative regression form:
\begin{equation}
y_\theta-b=\frac{\kappa}{1+\gamma+\lambda}(g-b)
\quad\text{(MFA)},\qquad
y_\theta-b=\frac{e}{\beta+k}(g-b)
\quad\text{(MeanFlowNFT)},
\label{eq:dna-degenerate}
\end{equation}
where $e$ is the NFT target-extrapolation multiplier and $k$ its reference weight. Table~\ref{tab:dna-matched} shows comparable results when learning rate and nominal extrapolation are matched. The earlier individually tuned $A\equiv1$ configurations are retained in Tables~\ref{tab:dna-valid}--\ref{tab:dna-seeds} as ablations.

\begin{table}[htbp]
\caption{$A\equiv1$ special case under matched learning rate and nominal extrapolation. Checkpoints are selected on valid ($n=256$) and evaluated on test ($n=512$). $\Delta$ is paired NFT$-$MFA Sei MSE with a $95\%$ bootstrap interval.}
\label{tab:dna-matched}
\begin{center}
\resizebox{\linewidth}{!}{
\begin{tabular}{ccccc}
\toprule
lr & $\kappa$ & NFT MSE$\downarrow$ & MFA MSE$\downarrow$ & $\Delta$ [$95\%$ CI] \\
\midrule
$10^{-5}$ & $1.25$ & $0.0550$ & $0.0545$ & $0.0005$ $[-0.0007,0.0018]$ \\
$10^{-5}$ & $1.5$ & $0.0490$ & $0.0490$ & $0.0000$ $[-0.0010,0.0010]$ \\
$2.5{\times}10^{-5}$ & $1.25$ & $0.0525$ & $0.0518$ & $0.0007$ $[-0.0013,0.0028]$ \\
$2.5{\times}10^{-5}$ & $1.5$ & $0.0474$ & $0.0482$ & $-0.0009$ $[-0.0025,0.0008]$ \\
\bottomrule
\end{tabular}}
\end{center}
\end{table}

\textbf{Negative-direction ablation.}
In on-policy RL, negative advantages suppress sampled actions relative to the rollout policy. Here, a teacher-worse endpoint does not define a calibrated opposite target: unfiltered signed variants reach only $0.0744$--$0.0755$ best valid Sei MSE, versus $0.0510$ for filtered graded-advantage MFA. We therefore retain the filter and leave distillation-specific negative directions for future work. Figure~\ref{fig:dna-si} collects the $A\equiv1$ diagnostics: pretraining stability, paired valid differences, and evaluation-size sensitivity. Figure~\ref{fig:dna-si-onpolicy} shows training-pair teacher/student Sei MSE for the same special case. Table~\ref{tab:dna-seeds} varies the training seed for the earlier tuned $A\equiv1$ configurations while fixing the RMF initialization and eval seed $7$.

\begin{table}[htbp]
\caption{Individually tuned $A\equiv1$ special case on valid
$1$-NFE FANTOM5 ($n=512$, chr10). Online selection used $n=256$.}
\label{tab:dna-valid}
\begin{center}
\begin{tabular}{lccc}
\toprule
Method & step & Sei MSE$\downarrow$ & $6$-mer$\uparrow$ \\
\midrule
Pretrained RMF & $0$ & $0.0754$ & $0.924$ \\
MeanFlowNFT & $70$ & $0.0553$ & $0.951$ \\
MeanFlowAdvantage & $55$ & $\mathbf{0.0501}$ & $0.950$ \\
\bottomrule
\end{tabular}
\end{center}
\end{table}

\begin{table}[htbp]
\caption{Paired valid Sei $\Delta$MSE (NFT $-$ MFA) for the
individually tuned $A\equiv1$ special case, $n{=}512$,
$2000$ bootstrap draws.
The interval is the $95\%$ percentile CI.}
\label{tab:dna-paired}
\begin{center}
\begin{tabular}{lcc}
\toprule
step & $\Delta$MSE & $95\%$ CI \\
\midrule
$0$ & $0.000$ & $[0.000,0.000]$ \\
$10$ & $0.0103$ & $[0.0065,0.0141]$ \\
$50$ & $0.0065$ & $[0.0031,0.0101]$ \\
$60$ & $0.0058$ & $[0.0025,0.0092]$ \\
$70$ & $0.0052$ & $[0.0022,0.0083]$ \\
$100$ & $0.0081$ & $[0.0044,0.0116]$ \\
\bottomrule
\end{tabular}
\end{center}
\end{table}

\begin{table}[htbp]
\caption{FANTOM5 training-seed ablation for the individually tuned $A\equiv1$ special case.
All runs start from the same RMF \texttt{ckpts/0120} snapshot.
Checkpoints are selected on valid ($n{=}256$, eval seed $7$) and
 reported on test ($n{=}512$, eval seed $7$).
Only the training seed changes.}
\label{tab:dna-seeds}
\begin{center}
{\small
\begin{tabular}{lcccccc}
\toprule
 & \multicolumn{3}{c}{MeanFlowNFT} & \multicolumn{3}{c}{MeanFlowAdvantage} \\
\cmidrule(lr){2-4} \cmidrule(lr){5-7}
Seed & step & Sei MSE$\downarrow$ & $6$-mer$\uparrow$
 & step & Sei MSE$\downarrow$ & $6$-mer$\uparrow$ \\
\midrule
$123^{\dagger}$ & $70$ & $0.0542$ & $0.956$
 & $55$ & $\mathbf{0.0486}$ & $0.958$ \\
$42$ & $60$ & $0.0550$ & $0.955$
 & $55$ & $\mathbf{0.0492}$ & $0.955$ \\
$456$ & $100$ & $0.0550$ & $0.961$
 & $45$ & $\mathbf{0.0496}$ & $0.951$ \\
$789$ & $90$ & $0.0551$ & $0.955$
 & $35$ & $\mathbf{0.0479}$ & $0.953$ \\
\midrule
Mean & --- & $0.0548$ & $0.957$
 & --- & $\mathbf{0.0488}$ & $0.954$ \\
Std. & --- & $0.0004$ & $0.003$
 & --- & $0.0007$ & $0.003$ \\
\bottomrule
\end{tabular}}
\end{center}
\vspace{2pt}
{\footnotesize $^{\dagger}$Original tuned $A\equiv1$ run.
Std.\ is the sample standard deviation over the four seeds.}
\end{table}

\begin{figure}[htbp]
\centering
\includegraphics[width=\linewidth]{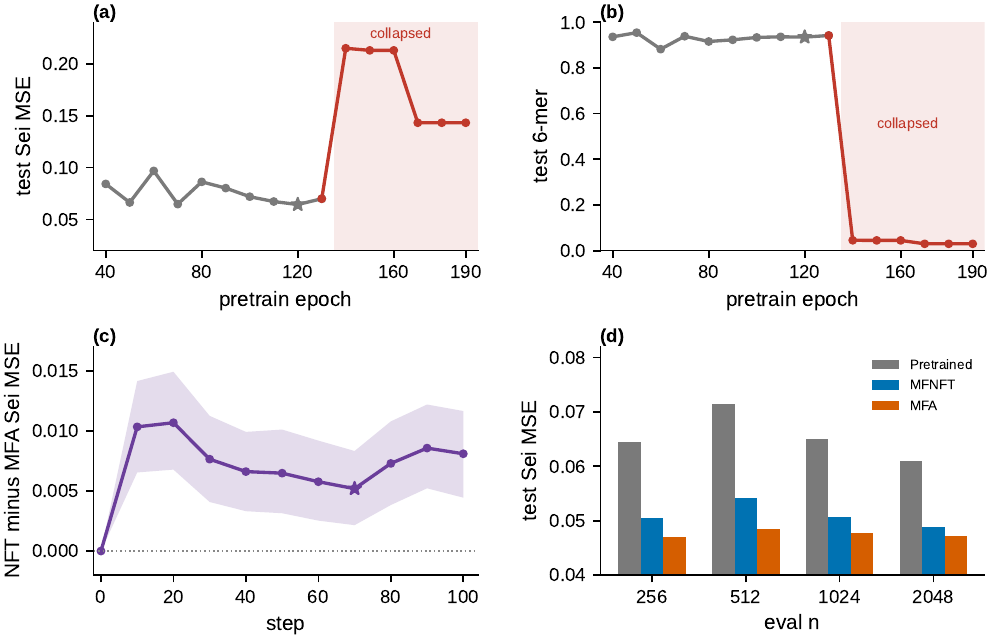}
\caption{Promoter analyses that are not in Figure~\ref{fig:dna}.
(a)--(b)~Test $1$-NFE Sei MSE and $6$-mer versus pretrain epoch
($n{=}256$).
The star is the RL init (\texttt{ckpts/0120}); the shaded region is
the collapsed tail.
(c)~Paired valid $\Delta$MSE (NFT $-$ MFA) with $95\%$
sequence-level bootstrap CIs ($n{=}512$).
The star is the on-grid NFT pick (step $70$).
(d)~Test Sei MSE at the tuned $A\equiv1$ checkpoints versus
CAGE-sorted eval $n\in\{256,512,1024,2048\}$.}
\label{fig:dna-si}
\end{figure}

\begin{figure}[htbp]
\centering
\includegraphics[width=0.62\linewidth]{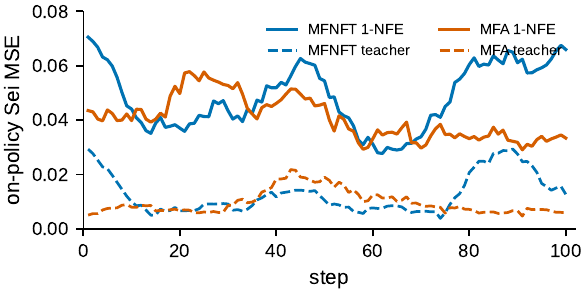}
\caption{Training-pair Sei MSE of accepted distillation pairs
(length-$15$ rolling mean).
Solid: $1$-NFE student. Dashed: $10$-NFE teacher.
These scores are computed for the accept filter
($\Delta r_{\mathrm{Sei}}\ge 0.002$); Table~\ref{tab:dna} instead
uses hard-decoded held-out sequences scored by Sei.}
\label{fig:dna-si-onpolicy}
\end{figure}

\end{document}